\documentclass{article}

\PassOptionsToPackage{numbers, compress}{natbib}

\usepackage[preprint]{neurips_2026}
\usepackage[utf8]{inputenc} % (ok; often unnecessary on newer LaTeX)
\usepackage[T1]{fontenc}
\usepackage{times}          % NeurIPS requires Times New Roman
\usepackage{microtype}
\usepackage{xcolor}
\usepackage{url}
\usepackage{hyperref}

\hypersetup{
  colorlinks=true,
  linkcolor=blue,
  citecolor=blue,
  urlcolor=blue
}

\usepackage{graphicx}       % recommended for \includegraphics
\usepackage{booktabs}       % recommended for professional tables
\usepackage{multirow}
\usepackage{array}
\usepackage{float}
\usepackage{tikz}
\usetikzlibrary{positioning, arrows.meta, decorations.pathreplacing, calc, fit, backgrounds}
\usepackage{pgfplots}
\pgfplotsset{compat=1.18}

\usepackage{subcaption}
\usepackage{amsmath,amssymb,amsfonts} % AMS math
\usepackage{mathtools}               % extends amsmath
\usepackage{bm}                      % bold math
\usepackage{amsthm}
\newtheorem{proposition}{Proposition}
\newtheorem{remark}{Remark}
\usepackage{algorithm}
\usepackage{algorithmic}

\usepackage{enumitem}
\usepackage{comment}
\usepackage{titletoc}       % local (appendix-scoped) clickable TOC

\title{Learning Scenario Reduction for Two-Stage\\ Robust Optimization with Discrete Uncertainty}

\author{
\textbf{Tianjue Lin$^{1}$} \quad
\textbf{Jianan Zhou}$^{1}\thanks{Corresponding author.}$ \quad
\textbf{Jieyi Bi$^{1}$}\\
\textbf{Yaoxin Wu$^{2}$} \quad
\textbf{Wen Song$^{3}$} \quad
\textbf{Zhiguang Cao$^4$} \quad
\textbf{Jie Zhang}$^1$ \vspace{2mm}\\
$^1$Nanyang Technological University\quad
$^2$Eindhoven University of Technology\\
$^3$Shandong University\quad
$^4$Singapore Management University \vspace{2mm}\\
\textnormal{\{tianjue002, jianan004, jieyi001\}@e.ntu.edu.sg},\, 
\textnormal{y.wu2@tue.nl},\\
\textnormal{wensong@email.sdu.edu.cn},\, 
\textnormal{zgcao@smu.edu.sg},\, 
\textnormal{zhangj@ntu.edu.sg}
}

\begin{document}
\raggedbottom

\maketitle

\begin{abstract}
Two-Stage Robust Optimization (2RO) with discrete uncertainty is challenging, often rendering exact solutions prohibitive. Scenario reduction alleviates this issue by selecting a small, representative subset of scenarios to enable tractable computation. However, existing methods are largely \emph{problem-agnostic}, operating solely on the uncertainty set without consulting the feasible region or recourse structure. In this paper, we introduce PRISE, a \emph{problem-driven} sequential lookahead heuristic that constructs reduced scenario sets by evaluating the marginal impact of each scenario. While PRISE yields high-quality scenario subsets, each selection step requires solving multiple subproblems, making it computationally expensive at scale. To address this, we propose NeurPRISE, a neural surrogate model built on a GNN-Transformer backbone that encodes the per-scenario structure via graph convolution and captures cross-scenario interactions through attention. NeurPRISE is trained via imitation learning with a gain-aware ranking objective, which distills marginal gain information from PRISE into a learned scoring function for scenario ranking and selection. 
Extensive results on three 2RO problems show that NeurPRISE consistently achieves competitive regret relative to comprehensive methods, maintains strong scalability with varying numbers of scenarios, and delivers $7$--$200\times$ speedup over PRISE. NeurPRISE also exhibits strong zero-shot generalization, effectively handling instances with larger problem scales (up to $5\times$), more scenarios (up to $4\times$), and distribution shifts.
\end{abstract}
% and the difficulty is further exacerbated when decision variables are combinatorial
% distilling PRISE's marginal gain information into a learned scoring function. 
% we propose \emph{NeurPRISE}, a GNN--Transformer architecture that encodes per-scenario MIP structure on a bipartite constraint-variable graph and models cross-scenario interactions via Transformer. 
% achieving up to $14\times$ lower optimality gap than Neur2RO, 
% (Selection, Vertex Cover, and Capacitated Facility Location)    
\section{Introduction} \label{sec:intro}

% Optimization traditionally assumes that all problem parameters are fixed and known. In practice, however, this assumption rarely holds: parameters are inherently uncertain. Customer demand in supply chains and energy systems fluctuates unpredictably across time and geography. Measurement errors and estimation inaccuracies mean that input data (e.g., costs, capacities, or travel times) are only approximations of their true values.
% This uncertainty has real consequences. A plan optimized for today's demand forecast may become costly or infeasible when actual demand diverges. Solutions calibrated to approximate data can violate constraints or perform highly suboptimally once deployed under true parameter values \cite{ben2000robust}. Robust Optimization (RO) addresses this by seeking solutions that remain feasible under worst-case parameter realizations within a predefined uncertainty set \cite{ben2000robust}. 

Deterministic optimization paradigms typically assume that all problem parameters are precisely known at the time of decision-making. In practice, however, this assumption rarely holds. Parameters are inherently uncertain, often subject to fluctuating customer demand, measurement inaccuracies, or estimation errors. Neglecting these uncertainties can have severe consequences: solutions deemed "optimal" under estimated parameters are often fragile, leading to constraint violations or catastrophic performance degradation when deployed in real-world environments. Robust Optimization (RO) mitigates this risk by seeking solutions that remain feasible and performant under the worst-case parameter realizations within a predefined uncertainty set \cite{ben2000robust, ben2009robust, bertsimas2004price}. Two-Stage Robust Optimization (2RO) extends this paradigm by allowing recourse decisions to be deferred until after the uncertainty is observed \cite{ben2004adjustable, zeng2013solving, zhao2012exact}, effectively balancing proactive planning with adaptive flexibility.

While most literature focuses on continuous uncertainty models (e.g., box \cite{soyster1973convex}, polyhedral \cite{bertsimas2006robust}, or ellipsoidal \cite{ben2000robust} sets), discrete uncertainty sets are often more representative of real-world settings where decision-makers rely on finite collections of historical scenarios or expert-defined scenarios \cite{goerigk2024data}.
However, 2RO under discrete uncertainty, especially when coupled with discrete decision variables, is notoriously difficult. The interplay between first-stage commitments and second-stage recourse renders exact solution methods (e.g., Column-and-Constraint Generation (CCG)~\cite{zeng2013solving, zhao2012exact}) prohibitive as the scenario space scales \cite{kasperski2016robust}, creating a critical need for efficient solution strategies.
% This work addresses the intersection of \emph{2RO within discrete decision spaces and discrete uncertainty sets}. This class of problems is notoriously computationally intensive, as it requires navigating the combinatorial complexities inherent to both the decision variables and the uncertainty structure \cite{kasperski2016robust}.
% This work focuses on 2RO under discrete decision spaces and uncertainty sets—a class of problems that is inherently computationally challenging due to the combinatorial nature of both \cite{kasperski2016robust}.

Traditional heuristic approaches, including $k$-adaptability \cite{bertsimas2010finite, hanasusanto2015k}, decision rules \cite{bertsimas2018binary}, and uncertainty set splitting \cite{postek2016multistage}, typically gain tractability by imposing structural restrictions on the solution space, often at the expense of optimality. Alternatively, scenario reduction simplifies the problem at the input level: it compresses the uncertainty set into a smaller, representative subset to enable tractable exact solving.
% Scenario reduction takes a different approach: rather than modifying the solution method, it reduces the uncertainty set to a smaller, representative subset, enabling tractable exact solving downstream.
However, most existing reduction methods remain \emph{problem-agnostic} \cite{goerigk2019representative, goerigk2023optimal}, operating exclusively on the uncertainty set while ignoring the feasible region and recourse dynamics.
While \cite{fairbrother2024problem} introduce a problem-driven approach for single-stage RO by formulating scenario reduction as a Mixed-Integer Linear Programming (MILP), this formulation does not readily extend to 2RO, as it may break into a much harder nested (e.g., tri-level) optimization problem.
% where evaluating the marginal impact of a scenario requires solving the recourse problem.}
% the marginal impact of a scenario can only be determined by solving a costly recourse optimization problem.
% extending this logic to the two-stage setting is non-trivial. In 2RO, evaluating a scenario’s marginal impact requires solving the recourse subproblem—a step that is itself computationally intensive.
% \cite{fairbrother2024problem} introduce a problem-driven criterion for single-stage RO, but it does not carry over to the two-stage setting, where evaluating a scenario's cost requires solving the recourse problem.

In this paper, we study 2RO under discrete uncertainty. We introduce PRoblem-drIven SEquential lookahead (PRISE), a heuristic that frames scenario reduction as an objective-aware sequential decision process. This framework constructs the reduced scenario set incrementally by evaluating the \emph{marginal gain} associated with each candidate scenario. Specifically, PRISE identifies the scenario that, when integrated into the current partial set, yields the most significant improvement in the approximation of the robust objective. By the \emph{min-max-min} structure of 2RO, the reduced-scenario objective value is \emph{monotonically non-decreasing} as scenarios are added. Empirically, PRISE exhibits strong \emph{diminishing returns}: the most critical worst-case scenarios are identified in the first few iterations, so this incremental construction yields a nested sequence of subsets that is near-optimal across all cardinalities. Our results show that PRISE converges to within 1\% of the full-scenario objective using 2--8\% of the scenario set, whereas problem-agnostic baselines need $2$--$7{\times}$ more scenarios for equivalent coverage (Appendix~\ref{subsec:prise_empirical_analysis}).

While PRISE yields high-quality scenario subsets, it becomes computationally prohibitive as the selection horizon expands, as each selection step requires solving the reduced problem for every candidate scenario subset. To address this, we propose NeurPRISE, a neural surrogate model designed to accelerate the selection process.
NeurPRISE leverages a GNN-Transformer backbone: a GNN encodes each scenario structure (e.g., MILP) onto a bipartite constraint-variable graph via message passing, while Transformer blocks capture cross-scenario interactions through attention.
% that encodes each scenario's MIP structure on a bipartite constraint-variable graph via message passing and models cross-scenario interactions via Transformer attention.
The model is trained via Imitation Learning (IL) using a gain-aware ranking objective. This objective distills the marginal gain information from PRISE into a learned scoring function, enabling NeurPRISE to rank scenarios by their expected contribution to the robust objective.
% At inference, scenario selection requires only a GNN forward pass with no additional optimization solves. 
Empirical results show that this learned policy dramatically accelerates scenario reduction while maintaining solution quality comparable to the original PRISE heuristic.
% The learned policy dramatically accelerates scenario reduction while maintaining solution quality comparable to PRISE.
% The core novelty lies in the training signal---not any single neural component: ablation studies (Appendix~\ref{subsec:arch_loss_ablation}) show that replacing SWKL with MSE regression degrades by up to $2\times$, while replacing the GNN-Transformer with DeepSets degrades by $1.4$--$13.0\times$ (strongest on SEL where combinatorial structure is critical).
Our contributions are summarized as follows:
\begin{itemize}
    \item
    % We study a challenging yet realistic setting of 2RO under discrete uncertainty and decision spaces. Accordingly, we introduce PRISE, a problem-driven sequential lookahead heuristic for scenario reduction, with established monotonicity (Proposition~\ref{prop:prise_monotonicity}) and empirically demonstrate strong diminishing returns.
    We investigate 2RO within the challenging yet practical regime of discrete uncertainty. To address this, we introduce PRISE, a problem-driven sequential lookahead heuristic for effective scenario reduction. We theoretically prove its monotonicity and provide empirical evidence of its strong diminishing returns.
    % demonstrating that a small number of iterations effectively captures the critical uncertainty.
    
    \item 
    % To mitigate the expansive cost of PRISE, we propose a neural surrogate model NeurPRISE, which leverages a GNN-Transformer backbone to captures both scenario-level structure and set-level interactions. NeurPRISE is trained using imiration leanring with a gain-aware ranking objective that distills PRISE's marginal-gain information into a learned scoring function.
    To mitigate the computational overhead of PRISE, we propose NeurPRISE, a neural surrogate model that leverages a GNN-Transformer backbone to capture both scenario-level structure and set-level interactions. NeurPRISE is trained via IL using a gain-aware ranking objective, allowing it to rank scenarios by their expected contribution to the robust objective.
    % bypassing the need for expensive optimization solves during selection.
    
    \item Experiments across three 2RO problem classes (Selection, Vertex Cover, and Capacitated Facility Location) show that NeurPRISE is reliably competitive with classical and ML-based baselines, approaching PRISE-oracle quality at a fraction of its cost. Furthermore, NeurPRISE exhibits strong generalization to larger problem scales, higher scenario counts, and distribution shifts.
    % Experiments on three 2RO problem classes, including selection, vertex cover, and capacitated facility location, show that NeurPRISE consistently achieves lower optimality gaps than classical and ML-based baselines, while approaching the solution quality of the PRISE oracle at a fraction of its computational cost. Moreover, NeurPRISE demonstrate strong zero-shot generalization on settings of larger scales, higher scenario counts, and distributional shifts.
    % Models trained on small instances generalize to larger scales (up to $5\times$), higher scenario counts, and distributional shifts without retraining.
\end{itemize}

% \textbf{PRISE Heuristic and Analysis:}
% \textbf{NeurPRISE --- Neural Imitation of PRISE:}
% \textbf{Empirical Validation and Generalization:}    
\section{Related Work}

\textbf{Two-Stage Robust Optimization.}
% Robust optimization \citep{ben2009robust} ensures feasibility under worst-case realizations. 
Robust optimization \citep{ben2009robust} seeks solutions that remain feasible and performant under the worst-case realizations of uncertainty.
% whose sets range from interval and ellipsoidal \citep{ben2009robust} to polyhedral \citep{bertsimas2006robust}, budget \citep{bertsimas2004price}, and discrete sets \citep{kasperski2016robust}.
Two-stage robust optimization generalizes this framework by allowing for recourse after uncertainty is revealed \citep{ben2004adjustable}, albeit at a significantly higher computational cost.
To achieve exact 2RO solutions, CCG \citep{zhao2012exact, zeng2013solving} provides a framework that avoids the full extensive-form reformulation, which otherwise necessitates a block of recourse variables and constraints for every possible scenario.
Instead, CCG iteratively identifies the current worst-case scenario and appends its associated recourse block to the master problem, terminating when the robust objective can no longer be improved.
To sidestep the complexity of exact methods, various heuristics have been developed.
% Heuristic approaches are motivated by the need to sidestep this complexity: 
Decision-rule approaches \citep{bertsimas2015nearoptimal, bertsimas2018binary} are widely used for continuous uncertainty, where restricting recourse to a structured function of the uncertainty often yields a tractable reformulation. 
$K$-adaptability \citep{bertsimas2010finite, hanasusanto2015k, subramanyam2020k} pre-commits to $K$ recourse plans before uncertainty is revealed, effectively diversifying recourse policies to hedge against both continuous and discrete uncertainty.
% Our work is complementary to these efforts: whereas $K$-adaptability focuses on diversifying recourse, we prioritize identifying a $K$-cardinality subset of representative scenarios to solve for a single, optimal first-stage decision.
Recently, ML approaches have emerged to further accelerate these processes. For instance,
\cite{bertsimas2024machine} train a model to predict solution strategies, namely the first-stage decision, the worst-case scenario, and the active constraint set. However, their approach requires continuous recourse and polyhedral uncertainty. 
% \cite{xiao2024neural} study robust routing under budgeted travel-time uncertainty, learning attention-based policies for the min-max regret objective; 
\cite{brenner2025a} propose a data-driven framework that leverages a deep generative model to produce less conservative uncertainty realizations, but their method requires continuous uncertainty for their gradient-based approach.
\cite{goerigk2024data} use ML to identify critical initial scenarios for CCG, targeting solver warm-start rather than scenario set reduction. \cite{dumouchelle2023neur2ro} learn a surrogate for the full two-stage objective as a function of the first-stage decision and uncertainty realization, embedding it within an MILP solver to accelerate worst-case search.
% Our approach is complementary: rather than diversifying recourse, we select $K$ representative scenarios to solve for a single optimal first-stage decision.

\textbf{Scenario Reduction.}
Scenario reduction seeks to construct a compact yet representative subset of scenarios to alleviate computational burdens.
% Scenario reduction constructs a smaller, representative subset of scenarios to mitigate computational complexity.
In stochastic programming, classical approaches focus on minimizing probability distance metrics \citep{dupavcova2003scenario, heitsch2003scenario}, while more recent methods integrate objective sensitivity and constraint structure to enhance decision quality \citep{keutchayan2023problem, henrion2022problem, hewitt2022decision, bertsimas2023optimization}.
% In stochastic programming, classical methods minimize probability distance metrics \citep{dupavcova2003scenario, heitsch2003scenario} or incorporate objective sensitivity and constraint structure \citep{keutchayan2023problem, henrion2022problem, hewitt2022decision, bertsimas2023optimization}. 
In Two-Stage Stochastic Programming (2SP), \cite{wu2022learning} learn latent representations to facilitate scenario reduction, while \cite{pmlr-v235-wu24ag} select representative scenarios using a first-stage solution consistency reward. However, this reward is tailored to the expected-value objective in 2SP and cannot extend to the worst-case setting of 2RO.
% this reward is tailored to 2SP's expected-value objective and does not directly transfer to 2RO's worst-case setting.
Within the realm of robust optimization, the most relevant prior work is \cite{goerigk2023optimal}, who investigates scenario reduction for 2RO by solving a combinatorial selection problem with provable approximation guarantees.
% The closest prior work in robust optimization is \cite{goerigk2023optimal}, who study scenario reduction for 2RO with discrete uncertainty by solving a combinatorial selection problem with provable objective approximation guarantees; 
% we use their method as the SOR baseline. 
However, their formulation is problem-agnostic, relying solely on the geometry of the uncertainty set without exploiting the feasible region or recourse structure.
% However, their formulation depends only on the uncertainty set and does not exploit the feasible region or recourse structure, making it \emph{problem-agnostic}.
While \cite{fairbrother2024problem} propose a problem-driven criterion for single-stage RO with objective uncertainty, their criterion is designed for the single-stage setting and does not extend to 2RO, where the objective depends on the recourse decision.
% \cite{fairbrother2024problem} propose a problem-driven criterion for single-stage RO with objective uncertainty, but their approach relies on the linear structure of the single-stage objective and does not extend to the two-stage setting, where the objective also depends on the recourse decision.
Our work bridges this gap by extending the problem-driven paradigm to 2RO through a learning-based approach that directly evaluates the marginal impact of each scenario.
% Our work extends the problem-driven perspective to 2RO through a learning-based approach that directly evaluates marginal scenario impact on the robust objective.
% \paragraph{ML for Combinatorial Optimization.}
% Neural networks have been applied to accelerate combinatorial optimization \citep{bengio2021machine}, including supervised learning of branching rules \citep{gasse2019exact}, cut selection \citep{paulus2022learning}, and column generation pricing \citep{morabit2021machine}, as well as RL-based policies \citep{bello2016neural, kool2018attention}.

\section{Preliminaries}
\label{sec:preliminaries}

% \colorb: entire Section 3 rewritten 2026-04-12 (moved MILPs to Appendix C.1)

In this paper, we focus on linear two-stage robust optimization over a finite discrete uncertainty set $\Xi = \{\xi_1, \dots, \xi_S\}$ of size $S$, supplied by the decision-maker (e.g., derived from historical scenarios). Following the notation of \cite{zeng2013solving}, the problem is formulated as follows:
\begin{equation}
\begin{aligned}
\min_{x \in X} \quad & c^\top x + \max_{\xi \in \Xi} \min_{y \in F(x, \xi)} b_\xi^\top y, \\
\text{s.t.} \quad & Ax \geq d,
\end{aligned}
\label{eq:2ro_formulation}
\end{equation}
where $x \in \mathbb{R}^n$ and $y \in \mathbb{R}^m$ denote the first- and second-stage decisions; $c \in \mathbb{R}^n$ and $b_\xi \in \mathbb{R}^m$ are cost vectors; and $F(x, \xi) = \{y \in Y : Gy \geq h - Ex - M\xi\}$ is the recourse feasible set. We consider integer first-stage decisions ($X \subseteq \mathbb{Z}^n_+$) with integer or continuous recourse ($Y \subseteq \mathbb{Z}^m_+$ or $\mathbb{R}^m_+$), and assume \emph{relatively complete recourse}: $F(x, \xi) \neq \emptyset$ for all feasible $x$ and $\xi \in \Xi$.
The remaining parameters are $A \in \mathbb{R}^{p \times n}$, $d \in \mathbb{R}^p$, $G \in \mathbb{R}^{r \times m}$, $h \in \mathbb{R}^r$, $E \in \mathbb{R}^{r \times n}$, and $M \in \mathbb{R}^{r \times q}$.

\textbf{Core Notation.}
Let $Q(x,\xi) := \min_{y \in F(x,\xi)} b_\xi^\top y$ denote the \emph{recourse cost} of first-stage decision $x$ under scenario $\xi$. For any first-stage decision $x$, we denote its \emph{full-scenario cost} as
\[
Z(x) := c^\top x + \max_{\xi \in \Xi} Q(x,\xi).
\]
For any scenario subset $R \subseteq \Xi$, we define the \emph{restricted-scenario objective value} $V(R)$ as follows:
\begin{equation}
\label{eq:V_def}
V(R) \;:=\; \min_{x \in X} \left[ c^\top x \,+\, \max_{\xi \in R}\, Q(x,\xi) \right],
\end{equation}
with the convention $V(\emptyset) := 0$. 
The value $V(\Xi)$ corresponds to the \emph{full-scenario objective value}. 
When $R=R^{(k)}$ with $|R^{(k)}|=k$, $V(R^{(k)})$ is the reduced-scenario objective value used to evaluate a $k$-scenario reduction.
To evaluate the quality of a reduced scenario set $R^{(k)}$, let $x^{(k)\star}$ denote an optimal first-stage solution to the reduced problem in Eq.~\eqref{eq:V_def}. The regret metric measures the suboptimality of $x^{(k)\star}$ relative to the true full-scenario optimum~\cite{dumouchelle2023neur2ro}:
\begin{equation}
\label{eq:regret}
\mathrm{Regret}(R^{(k)}) \;=\; \frac{Z(x^{(k)\star}) \;-\; V(\Xi)}{V(\Xi)} \times 100.
\end{equation}
Since $V(\Xi) = \min_{x \in X} Z(x)$, we have $\mathrm{Regret}(R^{(k)}) \geq 0$, with equality if and only if $x^{(k)\star}$ is optimal for the full-scenario problem.
In this paper, we treat $\Xi$ as the ground-truth discrete uncertainty set, rather than as samples drawn from an underlying continuous distribution. For finite $\Xi$, Problem~\eqref{eq:2ro_formulation} can be reformulated as a deterministic-equivalent MILP whose size scales linearly with $|\Xi|$ (see Appendix~\ref{subsec:milp_reformulation}), but this formulation becomes computationally expensive when $|\Xi|$ is large.

\begin{figure}[!t]
\centering
\includegraphics[width=\textwidth]{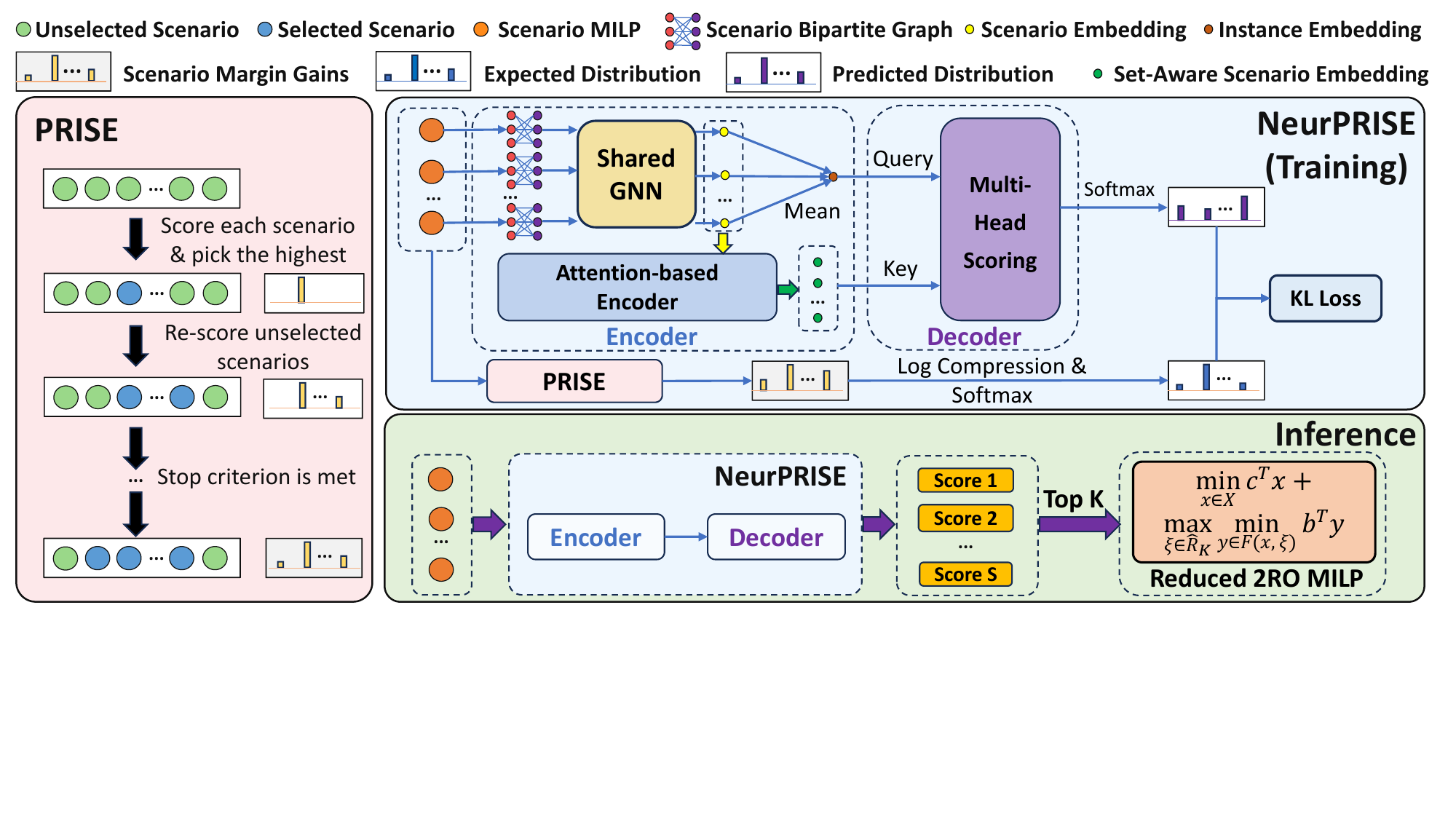}
%{fig/figure1_final.pdf}
% \vspace{0.1mm}
\caption{An illustrative overview of our approach.
\emph{Left:} PRISE; \emph{Upper Right:} NeurPRISE Framework; \emph{Lower Right:} NeurPRISE Inference.
}
\label{fig:architecture}
\end{figure}    
\section{Methodology}

In this section, we first introduce PRoblem-drIven SEquential lookahead (PRISE), a heuristic that formulates scenario reduction as an objective-aware sequential decision-making process. We then propose NeurPRISE, a neural surrogate model that alleviates computational overhead while preserving decision quality by learning from PRISE.

\subsection{PRISE}
\label{sec:heuristics}
PRISE constructs a reduced scenario set $R \subseteq \Xi$ sequentially, scoring each candidate scenario by its marginal impact on the reduced-scenario objective value $V(R)$ (Eq.~\eqref{eq:V_def}).
At step $t$, given the selected set $R_t$ (initialized as $R_0 = \emptyset$), PRISE evaluates each candidate $\xi_j \in \Xi \setminus R_t$ by solving the reduced problem on $R_t \cup \{\xi_j\}$ and selects:
\begin{equation}
\xi_t^{\mathrm{PRISE}} \in
\arg\max_{\xi_j \in \Xi \setminus R_t} V(R_t \cup \{\xi_j\}).
\label{eq:prise_scenario_choice}
\end{equation}
Equivalently, PRISE selects the scenario with the largest marginal gain $V(R_t \cup \{\xi_j\})-V(R_t)$: the scenario that most increases the reduced robust objective after the first-stage decision is re-optimized. Since evaluating $V(R_t \cup \{\xi_j\})$ entails solving the reduced MILP, the selection is inherently \emph{problem-driven}---it depends on the full problem structure (objective, constraints, recourse), not on the scenario alone. PRISE terminates after selecting a small fraction of $|\Xi|$ scenarios (Appendix~\ref{subsec:prise_empirical_analysis}).
The full procedure is detailed in Algorithm~\ref{alg:prise_supervision_fixed2}.
PRISE is characterized by two key properties:
1) \emph{Monotonicity}: $V(R)$ is monotonically non-decreasing as scenarios are added (Proposition~\ref{prop:prise_monotonicity}).
2) \emph{Empirical diminishing returns}: Although $V(R)$ is not submodular in general (Remark~\ref{rem:not_submodular}), marginal gains decrease rapidly in practice.  This diminishing-returns behavior motivates the loss design in Section~\ref{sec:loss}. 
PRISE's iterative structure is similar to the column-and-constraint generation~\cite{zhao2012exact,zeng2013solving}. For better readability, we provide their connections and differences in Appendix~\ref{subsec:prise_vs_ccg}.

\begin{algorithm}[t]
\caption{PRISE}
\label{alg:prise_supervision_fixed2}
\begin{algorithmic}[1]
\STATE \textbf{Input:} scenario set $\Xi{=}\{\xi_1,\dots,\xi_S\}$ with $S{=}|\Xi|$; maximum budget $K$; tolerance $\epsilon{\ge}0$
\STATE \textbf{Output:} compression budget $\hat{k}$; reduced set $R$; supervision data $\mathcal{D}$
\STATE Initialize $R \leftarrow \emptyset$, $\mathcal{D} \leftarrow \emptyset$, and $v_{\mathrm{prev}} \leftarrow 0$
\WHILE{$|R| < K$}
    \FOR{each unselected scenario $\xi_j \in \Xi \setminus R$}
        \STATE $\mathrm{score}(\xi_j) \leftarrow V(R \cup \{\xi_j\})$
    \ENDFOR
    \STATE $\xi^\star \leftarrow$ scenario with highest score
    \STATE $\Delta \leftarrow \mathrm{score}(\xi^\star) - v_{\mathrm{prev}}$
    \IF{$\Delta \le \epsilon$}
        \STATE \textbf{break}
    \ENDIF
    \STATE Add $(R,\xi^\star,\Delta)$ to supervision data $\mathcal{D}$
    \STATE Add $\xi^\star$ to the reduced set: $R \leftarrow R \cup \{\xi^\star\}$
    \STATE $v_{\mathrm{prev}} \leftarrow \mathrm{score}(\xi^\star)$
\ENDWHILE
\STATE $\hat{k} \leftarrow |R|$
\STATE \textbf{return} $\hat{k}$, $R$, $\mathcal{D}$
\end{algorithmic}
\end{algorithm}

% Computational complexity remark moved to Appendix~\ref{subsec:prise_empirical_analysis}.

\subsection{NeurPRISE}
While PRISE produces high-quality reduced sets, it requires solving $\mathcal{O}(K{\cdot}|\Xi|)$ MILPs per instance. We amortize this cost through imitation learning: NeurPRISE is a neural surrogate that learns to replicate PRISE's selection in a single forward pass. Formally, we frame scenario reduction as a set-selection task, where our goal is to learn a scoring function that approximates PRISE's marginal-gain ranking.
NeurPRISE is envisioned to capture problem-driven selection logic and efficiently predict the scenario set, substantially reducing computational overhead.
% This allows NeurPRISE to capture problem-driven selection logic while substantially reducing computational overhead.
% capturing the problem-driven selection logic while significantly reducing computational overhead.

\subsubsection{Model Architecture}
\label{sec:arch}
Given a 2RO instance with scenario set $\Xi$, the encoder represents each scenario subproblem independently and captures cross-scenario dependencies; the decoder then scores each scenario against an instance context. The model outputs logits for all $|\Xi|$ scenarios, enabling flexible top-$k$ selection without retraining. An overview is provided in Figure~\ref{fig:architecture}.
% the top $k$ scenarios. 
% Notably, this framework is not restricted to a fixed $k$ after training, providing the flexibility to generate a reduced set of any size $k \le K$. 
% Figure~\ref{fig:architecture} provides a high-level overview of the pipeline.

\textbf{Encoder.} We model each scenario $\xi_j \in \Xi$ as a bipartite graph $G_j$. We use the standard constraint--variable bipartite representation~\cite{gasse2019exact}. Details on graph construction, GNN layers
% GINE~\cite{xu2018how, hu2019strategies} layers, 
and feature engineering are given in Appendix~\ref{subsec:graph_encoding} (Figures~\ref{fig:graph_bipartite}. Each scenario graph is encoded into a $d_s$-dimensional embedding via GNN followed by global mean pooling over the node embeddings within $G_j$:
\begin{equation}
    \mathbf{h}_j = \mathrm{Pool}\left( \mathrm{GNN}_{\phi}(G_j) \right) \in \mathbb{R}^{d_s}.
\end{equation}
To capture inter-scenario dependencies, we refine these embeddings through a Transformer encoder, yielding set-aware scenario embeddings:
\begin{equation}
    \mathbf{Z}_{\mathrm{sce}} = \mathrm{TransformerEnc}(\mathbf{h}_{1:|\Xi|}) \in \mathbb{R}^{|\Xi| \times d_s}.
\end{equation}

\textbf{Decoder.} The decoder produces per-scenario importance scores by relating an instance context to individual scenarios via multi-head scoring. The instance context embedding is obtained by averaging the pre-Transformer GNN embeddings:
\begin{equation}
    \mathbf{z}_{\mathrm{inst}} = \frac{1}{|\Xi|}\sum_{j=1}^{|\Xi|} \mathbf{h}_j \;\in\; \mathbb{R}^{d_s}.
\end{equation}

The decoder receives two inputs: 1) \emph{Instance Context ($\mathbf{z}_{\mathrm{inst}}$):} A vector $[1 \times d_s]$ representing the aggregated instance-level structure, which serves as the \textit{Query} ($Q$).
2) \emph{Set-Aware Scenario Embeddings ($\mathbf{Z}_{\mathrm{sce}}$):} A matrix $[|\Xi| \times d_s]$ representing the encoded features of all $|\Xi|$ scenarios, which serves as the \textit{Key} (denoted $\mathbf{K}_{\mathrm{attn}}$ to distinguish from budget $k$).

% \paragraph{Multi-head Scoring.}
The inputs are projected into $H$ parallel heads with head dimension $d_k$. For each head $h \in \{1, \dots, H\}$ and scenario $j$, we compute a scaled dot-product score:
\begin{equation}
s_{h,j} = \frac{(\mathbf{z}_{\mathrm{inst}}\mathbf{W}_h^Q) \cdot (\mathbf{Z}_{\mathrm{sce},j}\mathbf{W}_h^K)}{\sqrt{d_k}} \in \mathbb{R},
\end{equation}
where $\mathbf{W}_h^Q, \mathbf{W}_h^K$ are learnable projection matrices. Each head captures a different aspect of scenario importance relative to the first-stage structure.
% \paragraph{Final Scoring.}
The $H$ per-head scores for each scenario are combined via a mixing multilayer perceptron (MLP) to produce the final logit:
\begin{equation}
z_j = \mathrm{MLP}\!\left([s_{1,j},\, \dots,\, s_{H,j}]\right), \quad j \in \{1, \dots, |\Xi|\},
\end{equation}
where the MLP maps $\mathbb{R}^H \to \mathbb{R}$ via a single hidden layer with ReLU activation. In parallel, it produces a vector of logits $\{z_j\}_{j=1}^{|\Xi|}$ for all scenarios, representing the predicted importance of each scenario. We refer to the complete learned pipeline---GNN-Transformer encoder, multi-head scoring decoder, and top-$k$ selection---as \textbf{NeurPRISE}. NeurPRISE scores all $|\Xi|$ candidates in a single forward pass.
\subsubsection{Loss Function}
\label{sec:loss}

PRISE assigns each scenario a one-shot importance score: for each selected scenario $\xi_{j_t}$, the marginal gain is scored by $\Delta_t = V(R_t \cup \{\xi_{j_t}\}) - V(R_t)$ (Algorithm~\ref{alg:prise_supervision_fixed2}, line~8), and unselected scenarios receive a score of zero. We collect these into a target gain vector $g \in \mathbb{R}^{|\Xi|}$ with $g_j = \Delta_t$ if scenario $j$ was selected at step $t$, and $g_j = 0$ otherwise. Due to the diminishing-returns property (Section~\ref{sec:heuristics}), the gains $g_j$ exhibit a steep decay: the most critical scenario may contribute a gain orders of magnitude larger than later-selected ones (Fig.~\ref{fig:prise_marginal_gains}). Without compression, the softmax target would concentrate nearly all mass on a single scenario. We apply log-compression to flatten this decay:
\begin{equation}
    y_j = \log(1 + g_j).
    \label{eq:log_compression}
\end{equation}

We convert these compressed gains into a soft target distribution $P$ over all $|\Xi|$ scenarios via temperature-scaled softmax, and define the predicted distribution $\hat{P}$ from its one-shot logits $z_j$:
\begin{equation}
    P_j = \frac{\exp(y_j / \tau)}{\sum_{\ell=1}^{|\Xi|} \exp(y_\ell / \tau)}, \qquad
    \hat{P}_j = \frac{\exp(z_j)}{\sum_{\ell=1}^{|\Xi|} \exp(z_\ell)},
    \label{eq:target_pred_dist}
\end{equation}
where $\tau$ is a temperature that flattens the target, ensuring unselected scenarios receive a non-trivial gradient signal. The training objective is:
\begin{equation}
    \mathcal{L}(\theta) = D_{\mathrm{KL}}\!\left( P \,\|\, \hat{P} \right).
    \label{eq:kl_loss}
\end{equation}
NeurPRISE produces $\hat{P}$ in a single forward pass over all $|\Xi|$ scenarios. At inference, the user selects the top-$k$ scenarios for any desired budget $k \le K$, with no step iteration required.

We refer to this objective as the \emph{Score-Weighted KL} (SWKL) loss. The forward direction $D_{\mathrm{KL}}(P \| \hat{P})$ weights the loss by the target $P$, ensuring every high-gain scenario receives gradient signal rather than only the single best one. SWKL performs consistently across all evaluated problems and instance sizes; a full ablation against BCE and R2R is in Appendix~\ref{subsubsec:ablation_loss_comparison}.

\section{Experiments}
\label{sec:experiments}

All experiments are conducted on a server with an AMD EPYC 7543 32-Core CPU (2.80\,GHz) and NVIDIA A6000 Ada GPUs with 48\,GB of VRAM. Optimization problems are solved using Gurobi 12.0.0 \cite{gurobi}. The neural architecture is implemented using PyTorch 2.8.0 \cite{paszke2019pytorch} and PyG 2.7.0 \cite{fey2019fastgraphrepresentationlearning}. All Gurobi calls use 40 threads and MIP gap tolerance $10^{-4}$ unless otherwise noted.

\textbf{Problems.}
Since \cite{goerigk2023optimal} is the most relevant prior study, we evaluate on its 2RO problem classes: \textbf{Selection (SEL)}, which selects a minimum-cost subset of items with uncertain costs; \textbf{Vertex Cover (VC)}, which finds a minimum-cost node cover with uncertain node costs. Detailed formulations are in Appendix~\ref{subsec:problem_formulations}. Instances are denoted SEL-$n$-$s$ and VC-$n$-$s$, where $n$ is the number of items/nodes and $s$ the number of scenarios. Moreover, we also evaluate the \textbf{Capacitated Facility Location Problem (CFLP)} where uncertainty appears in the constraints. Given space limitations, we defer its formulation, experimental setup, and results to Appendix~\ref{sec:cflp_appendix}.

\textbf{Instance Generation.}
\label{par:instance_gen}
We generate 2{,}500 \emph{small} instances (SEL-20-50, VC-20-50), 500 for \emph{medium} instances (SEL-50-50, VC-50-50), and 500 for \emph{large} instances (SEL-50-100, VC-50-100). Per-node scenario costs are drawn i.i.d.\ from $\mathrm{Uniform}[1,100]$, following \cite{goerigk2023optimal}. 
%The smaller count for medium and large reflects the super-linear growth of PRISE labeling cost with problem size; the generalization results in Section~\ref{sec:gen_size} show that training on small-scale labels transfers effectively, reducing the need for costly large-scale labeling.

\textbf{Training.}
For each instance size, we split instances into training, validation, and test sets (80\%/10\%/10\%), yielding 250 test instances for small and 50 each for medium and large. We train with AdamW for at most 200 epochs with early stopping (patience 10). The evaluation pipeline is fully deterministic ($\operatorname{top\text{-}k}$ scoring). 
The validation set is used for model selection; we report the checkpoint results with the lowest validation loss. In the main paper, we report results with seed~42. We further examine variance across five training seeds in Section~\ref{subsec:seed_variance}.
%Tables report single-seed (seed~42) results for readability. A five-seed variance analysis (Appendix~\ref{subsec:seed_variance}, Table~\ref{tab:seed_variance}) confirms that training stochasticity has negligible effect: max per-cell standard deviation is far smaller than the performance gaps between methods. 
%Random baseline entries are five-sampling-seed averages (per-cell standard deviation ${\leq}0.9$\,pp). Full hyperparameters are in Appendix~\ref{subsec:training_details}.

\textbf{Baselines.}
We compare against three categories of methods.
\emph{Reference methods:}
\textbf{Exact} (Gurobi, full scenario set) provides the exact full-scenario objective value $V(\Xi)$;
\textbf{PRISE} (our expert heuristic) provides the labeling oracle for NeurPRISE training.
\emph{Scenario reduction baselines:}
\textbf{Random} ($k$ scenarios sampled uniformly);
\textbf{K-means}~\cite{goerigk2023optimal} (cluster $S$ scenarios into $k$ groups, select the nearest original scenario to each centroid);
\textbf{MaxSum}~\cite{goerigk2024data}, which ranks scenarios by the sum of their cost coefficients and retains the top-$k$;
\textbf{SOR}~\cite{goerigk2023optimal}, which optimally selects $k$ representative scenarios with a provable worst-case approximation guarantee.
\emph{Learning-based 2RO:}
\textbf{Neur2RO}~\cite{dumouchelle2023neur2ro}, an end-to-end ML approach that trains a neural surrogate of the two-stage objective and embeds it into a CCG loop. 
% \textcolor{red}{We follow the original codebase and adapt it to our discrete setting.}
%\emph{Recourse diversification:}
%\textbf{K-adaptability}~\cite{subramanyam2020k}, which pre-computes $K$ recourse policies and assigns each scenario to its best-fitting policy.

% File: compare_table.tex
% Auto-generated by scripts/gen_table1_regret.py

\begin{table*}[!ht]
\centering
\caption{Performance comparison on small-scale instances (250 test instances). A dash~(-) denotes a $k$-agnostic method with identical results across budgets.}
\label{tab:neurips_full_results_small}
\begin{small}
\setlength{\tabcolsep}{2pt}
\renewcommand\arraystretch{1.1}
\begin{tabular}{@{}ll c@{\hspace{2pt}}c c@{\hspace{2pt}}c c@{\hspace{2pt}}c c@{\hspace{2pt}}c@{}}
\toprule[1.2pt]
\multirow{2}{*}{\textsc{Prob.}} & \multirow{2}{*}{\textsc{Method}} & \multicolumn{2}{c}{$k=1$} & \multicolumn{2}{c}{$k=2$} & \multicolumn{2}{c}{$k=4$} & \multicolumn{2}{c}{$k=6$} \\
\cmidrule(lr){3-4} \cmidrule(lr){5-6} \cmidrule(lr){7-8} \cmidrule(lr){9-10}
& & \textsc{Reg.}$\downarrow$ & \textsc{Time(s)} & \textsc{Reg.}$\downarrow$ & \textsc{Time(s)} & \textsc{Reg.}$\downarrow$ & \textsc{Time(s)} & \textsc{Reg.}$\downarrow$ & \textsc{Time(s)} \\
\midrule[1.2pt]
 \multirow{8}{*}{\rotatebox[origin=c]{90}{SEL}} & Exact & 0.00 & 6.1 & - & - & - & - & - & - \\
  & PRISE & 5.93 & 49.8 & 2.14 & 91.7 & 0.92 & 126.0 & 0.82 & 132.0 \\
\cmidrule{2-10}
  & Random & 22.11 & 3.8 & 16.13 & 3.8 & 12.29 & 3.9 & 9.72 & 4.1 \\
  & SOR & \textbf{5.45} & 25.9 & 4.57 & 53.0 & 3.44 & 63.4 & 2.83 & 64.5 \\
  & K-means & 10.89 & 26.0 & 9.02 & 61.0 & 7.35 & 91.0 & 7.44 & 110.0 \\
  & MaxSum & 6.93 & 4.7 & 4.50 & 3.8 & 2.47 & 3.9 & 1.96 & 4.2 \\
\cmidrule{2-10}
  & Neur2RO & 5.88 & \textbf{3.6} & - & - & - & - & - & - \\
  & NeurPRISE & 5.59 & 4.6 & \textbf{2.18} & 4.7 & \textbf{0.79} & 4.9 & \textbf{0.42} & 5.9 \\
\midrule
 \multirow{8}{*}{\rotatebox[origin=c]{90}{VC}} & Exact & 0.00 & 451.0 & - & - & - & - & - & - \\
  & PRISE & 13.85 & 82.2 & 7.91 & 584.0 & 3.00 & 3{,}008.0 & 1.04 & 5{,}680.0 \\
\cmidrule{2-10}
  & Random & 16.11 & \textbf{3.8} & 12.49 & 13.0 & 10.58 & 26.9 & 9.65 & 40.6 \\
  & SOR & 11.15 & 27.4 & 8.52 & 67.9 & 6.71 & 92.0 & 5.63 & 112.0 \\
  & K-means & 12.13 & 26.0 & 10.88 & 72.0 & 9.33 & 115.0 & 7.98 & 149.0 \\
  & MaxSum & 11.75 & 4.5 & \textbf{8.25} & 14.2 & 5.76 & 31.2 & 4.40 & 46.9 \\
\cmidrule{2-10}
  & Neur2RO & \textbf{11.13} & 6.1 & - & - & - & - & - & - \\
  & NeurPRISE & 14.75 & 6.4 & 9.36 & 18.1 & \textbf{4.20} & 37.6 & \textbf{2.34} & 53.9 \\
\bottomrule[1.2pt]
\end{tabular}
\end{small}
\end{table*}

\subsection{Comparison Analysis}

We evaluate methods across reduction budgets $k \in \{1,2,4,6\}$, where $k$ denotes the number of scenarios retained, and report mean regret and wall-clock time. A dash~(-) denotes a $k$-agnostic method with identical results across scenario-reduction budgets. Tables~\ref{tab:neurips_full_results_small} and~\ref{tab:neurips_full_results_medium} present results on small, medium and large instances, respectively, where consistent trends hold across all three scales. PRISE achieves the lowest regret at high wall-clock cost, as expected for a labeling oracle. NeurPRISE amortizes this cost by learning from PRISE labels, replacing repeated lookahead MILP evaluations with a single scoring pass. Overall, NeurPRISE achieves strong performance relative to non-oracle baselines. On SEL, NeurPRISE is competitive with the best non-oracle baseline at $k{=}1$ and achieves the lowest regret when $k{\ge}2$ across all scales. On VC, NeurPRISE is competitive with the leading baselines at $k{=}2$ and becomes the best among all scenario-reduction and learning-based methods when $k{\ge}4$. Meanwhile, NeurPRISE exhibits runtime comparable to scenario-based methods, while achieving substantially higher efficiency than PRISE and the Exact solver.

% File: compare_table_medium_large.tex
% Merged medium + large comparison table.

\begin{table*}[!ht]
\centering
\caption{Medium- and large-scale comparison (50 test instances each). All notation follows Table~\ref{tab:neurips_full_results_small}.}
\label{tab:neurips_full_results_medium}
\phantomsection\label{tab:neurips_full_results_large}
\begin{small}
\setlength{\tabcolsep}{2pt}
\renewcommand\arraystretch{0.9}
\begin{tabular}{@{}lll c@{\hspace{2pt}}c c@{\hspace{2pt}}c c@{\hspace{2pt}}c c@{\hspace{2pt}}c@{}}
\toprule[1.2pt]
\multirow{2}{*}{\textsc{Prob.}} & \multirow{2}{*}{\textsc{Scale}} & \multirow{2}{*}{\textsc{Method}} & \multicolumn{2}{c}{$k=1$} & \multicolumn{2}{c}{$k=2$} & \multicolumn{2}{c}{$k=4$} & \multicolumn{2}{c}{$k=6$} \\
\cmidrule(lr){4-5} \cmidrule(lr){6-7} \cmidrule(lr){8-9} \cmidrule(lr){10-11}
& & & \textsc{Reg.}$\downarrow$ & \textsc{Time(s)} & \textsc{Reg.}$\downarrow$ & \textsc{Time(s)} & \textsc{Reg.}$\downarrow$ & \textsc{Time(s)} & \textsc{Reg.}$\downarrow$ & \textsc{Time(s)} \\
\midrule[1.2pt]
 \multirow{14}{*}{\rotatebox[origin=c]{90}{SEL}}
  & \multirow{7}{*}{Med.}
  & Exact & 0.00 & 4.8 & - & - & - & - & - & - \\
  & & PRISE & 6.64 & 12.0 & 2.11 & 24.3 & 0.90 & 41.2 & 0.59 & 45.3 \\
\cmidrule{3-11}
  & & Random & 13.56 & 1.4 & 9.74 & 1.8 & 8.98 & 1.9 & 7.77 & 2.2 \\
  & & SOR & 8.01 & 7.8 & 6.00 & 18.8 & 5.37 & 24.0 & 4.91 & 22.4 \\
  & & MaxSum & 6.66 & 3.9 & 6.27 & 3.2 & 4.24 & 3.7 & 3.33 & 3.8 \\
\cmidrule{3-11}
  & & Neur2RO & 5.56 & \textbf{0.6} & - & - & - & - & - & - \\
  & & NeurPRISE & \textbf{4.38} & 1.6 & \textbf{4.12} & 1.6 & \textbf{2.19} & 1.9 & \textbf{1.76} & 3.0 \\
\cmidrule{2-11}
  & \multirow{7}{*}{Large}
  & Exact & 0.00 & 9.6 & - & - & - & - & - & - \\
  & & PRISE & 5.80 & 21.9 & 2.55 & 47.7 & 0.87 & 95.2 & 0.49 & 113.0 \\
\cmidrule{3-11}
  & & Random & 13.54 & 5.3 & 10.51 & 5.2 & 8.44 & 5.7 & 8.88 & 5.9 \\
  & & SOR & 8.30 & 24.4 & 6.32 & 112.0 & 6.05 & 148.0 & 5.00 & 173.0 \\
  & & MaxSum & 7.33 & 5.1 & 5.60 & 5.3 & 4.03 & 5.7 & 3.26 & 6.0 \\
\cmidrule{3-11}
  & & Neur2RO & 6.58 & \textbf{1.8} & - & \textbf{-} & - & \textbf{-} & - & \textbf{-} \\
  & & NeurPRISE & \textbf{4.41} & 1.8 & \textbf{3.67} & 1.9 & \textbf{2.55} & 2.9 & \textbf{1.67} & 3.5 \\
\midrule
 \multirow{14}{*}{\rotatebox[origin=c]{90}{VC}}
  & \multirow{7}{*}{Med.}
  & Exact & 0.00 & 1{,}690.0 & - & - & - & - & - & - \\
  & & PRISE & 16.35 & 33.5 & 9.17 & 526.0 & 3.83 & 2{,}966.0 & 2.28 & 7{,}595.0 \\
\cmidrule{3-11}
  & & Random & 17.10 & 3.5 & 12.51 & 12.0 & 10.18 & 29.0 & 9.18 & 49.8 \\
  & & SOR & 13.77 & 15.6 & 11.39 & 39.2 & 8.87 & 59.6 & 8.35 & 85.3 \\
  & & MaxSum & 13.65 & 13.3 & 9.56 & 24.0 & 6.90 & 44.1 & 5.20 & 84.1 \\
\cmidrule{3-11}
  & & Neur2RO & \textbf{12.87} & \textbf{2.9} & - & \textbf{-} & - & \textbf{-} & - & \textbf{-} \\
  & & NeurPRISE & 15.71 & 4.5 & \textbf{9.44} & 13.5 & \textbf{6.10} & 36.7 & \textbf{3.57} & 93.6 \\
\cmidrule{2-11}
  & \multirow{7}{*}{Large}
  & Exact & 0.00 & 4{,}782.0 & - & - & - & - & - & - \\
  & & PRISE & 16.07 & 63.6 & 9.20 & 1{,}016.0 & 3.98 & 6{,}166.0 & 2.88 & 18{,}713.0 \\
\cmidrule{3-11}
  & & Random & 15.61 & 25.2 & 12.71 & 33.7 & 9.47 & 51.2 & 8.69 & 74.4 \\
  & & SOR & 13.09 & 38.4 & 9.52 & 131.0 & 7.75 & 185.0 & 6.70 & 237.0 \\
  & & MaxSum & 13.41 & 25.9 & \textbf{7.99} & 34.1 & 6.91 & 52.1 & 5.37 & 80.2 \\
\cmidrule{3-11}
  & & Neur2RO & \textbf{12.85} & \textbf{2.1} & - & \textbf{-} & - & \textbf{-} & - & \textbf{-} \\
  & & NeurPRISE & 15.29 & 6.3 & 8.60 & 14.1 & \textbf{6.12} & 37.0 & \textbf{4.72} & 88.2 \\
\bottomrule[1.2pt]
\end{tabular}
\end{small}
\end{table*}

\subsection{Flexibility and Scalability}
\label{subsec:flexibility}

\textbf{Flexible scenario budgets.}
NeurPRISE produces a full scenario ranking, from which reduced sets of any size
$k$ can be extracted without retraining. This allows practitioners to flexibly adjust
the scenario budget at deployment time without regenerating labels or
re-running the training pipeline. Given the specified scenario budget $k$, 
the downstream MILP runtime can also be traded by relaxing solver gap tolerances.
When $k=4$ on large VC instances, a $7\%$ gap tolerance yields a $2\times$
speedup while regret remains nearly unchanged. With a more aggressive $25\%$
tolerance, runtime improves by $5\times$ at the cost of only a $3.4\%$ increase
in regret, providing a practical quality--speed knob when tight runtime constraints
are present (Appendix~\ref{sec:solver_tolerance}).

\textbf{Sub-linear runtime scaling with scenario count.}
\label{subsubsec:sub-linear-runtime}
NeurPRISE substantially limits the downstream optimization cost with much fewer scenarios, saving considerable time relative to Exact method. As downstream MILP is solved only on the selected $k$
scenarios, its complexity is governed by $k$, not the original scenario count
$S$. An ablation on the same instances
(Appendix~\ref{subsec:runtime_scaling}) confirms the speedup effect: increasing $S$
by $5\times$ raises total runtime by only $1.7\times$. This sub-linear scaling
allows NeurPRISE to remain practical when many candidate scenarios are available
at deployment.

\subsection{Generalization}
\label{sec:gen_size}
Since PRISE labeling cost grows super-linearly with problem size, generalization without retraining is essential. 
NeurPRISE is particularly suitable for this setting because its learned scoring rule is amortized across instances: once trained, the same model can be reused to rank scenarios under different scenario budgets, larger scenario sets, and shifted deployment distributions without generating new PRISE labels. We evaluate NeurPRISE along three axes:

\textbf{Problem size (up to $5\times$).}
\label{sec:gen_problem_size}
NeurPRISE generalizes to problem sizes up to $5\times$ larger than its training scale without retraining, a critical property since PRISE labeling cost at target scale can exceed the training cost. The large test instances scale problem size and scenario count relative to training; the small-scale model still matches or outperforms target-scale models trained directly on those sizes (Table~\ref{tab:generalization_merged}, Appendix~\ref{subsubsec:generalization_transfer}). On ultra-large instances at $5\times$ problem size (Table~\ref{tab:ultra_generalization}), NeurPRISE achieves the lowest regret on SEL across all budgets and on VC at $k{\ge}4$.

\textbf{Scenario count (up to $4\times$).}
% \textcolor{red}{While Section~\ref{subsubsec:sub-linear-runtime} showed runtime grows only sub-linearly with the scenario count, 
We demonstrate that the superior regret performance of our method effectively generalizes across substantially different numbers of scenarios. NeurPRISE scales naturally to larger scenario sets because the attention-based encoder processes scenario embeddings independently. Models trained on $s{=}50$ transfer to $s{=}100, 200$ without retraining (Table~\ref{tab:scenario_generalization}). NeurPRISE maintains its advantage across scenario counts.

% File: generalization_sidebyside.tex
% Tables 2 (ultra 5x) and 3 (scenario generalization) side-by-side.

\noindent
\begin{minipage}[t]{0.46\textwidth}
\centering\small
\setlength{\tabcolsep}{3pt}
\captionof{table}{Generalization to $5\times$ problem size.}
\label{tab:ultra_generalization}
\begin{tabular}{@{}llcccc@{}}
\toprule[1.2pt]
\multirow{2}{*}{\textsc{Prob.}} & \multirow{2}{*}{\textsc{Method}}
  & \multicolumn{4}{c}{\textsc{Regret (\%) $\downarrow$}} \\
\cmidrule(lr){3-6}
& & $k{=}1$ & $k{=}2$ & $k{=}4$ & $k{=}6$ \\
\midrule[1.2pt]
\multirow{3}{*}{\rotatebox[origin=c]{90}{SEL}}
 & MaxSum & 6.45 & 5.66 & 3.47 & 2.96 \\
 & Neur2RO & 6.85 & - & - & - \\
 & NeurPRISE & \textbf{5.21} & \textbf{3.26} & \textbf{2.12} & \textbf{0.81} \\
\midrule
\multirow{3}{*}{\rotatebox[origin=c]{90}{VC}}
 & MaxSum & 14.88 & \textbf{9.80} & 6.99 & 5.76 \\
 & Neur2RO & \textbf{14.79} & - & - & - \\
 & NeurPRISE & 15.98 & 10.09 & \textbf{5.89} & \textbf{4.20} \\
\bottomrule[1.2pt]
\end{tabular}
\end{minipage}
\hfill
\begin{minipage}[t]{0.52\textwidth}
\centering\small
\setlength{\tabcolsep}{2pt}
\captionof{table}{Scenario generalization: $S{=}50$ to $100, 200$.}
\label{tab:scenario_generalization}
\begin{tabular}{@{}ll ccc ccc@{}}
\toprule[1.2pt]
\textsc{Pr.} & \textsc{Method}
  & \multicolumn{3}{c}{$S{=}100$ $\downarrow$}
  & \multicolumn{3}{c@{}}{$S{=}200$ $\downarrow$} \\
\cmidrule(lr){3-5} \cmidrule(l){6-8}
& & $k{=}2$ & $k{=}4$ & $k{=}8$
  & $k{=}4$ & $k{=}8$ & $k{=}16$ \\
\midrule[1.2pt]
\multirow{3}{*}{\rotatebox[origin=c]{90}{SEL}}
 & MaxSum  & 6.00 & 3.57 & 2.44 & 2.84 & 1.99 & 1.23 \\
 & Neur2RO & 6.35 & - & - & 5.09 & - & - \\
 & NeurPRISE & \textbf{3.35} & \textbf{1.25} & \textbf{0.32} & \textbf{1.24} & \textbf{0.55} & \textbf{0.07} \\
\midrule
\multirow{3}{*}{\rotatebox[origin=c]{90}{VC}}
 & MaxSum  & 9.49 & 5.55 & 3.44 & 5.59 & 3.56 & 1.88 \\
 & Neur2RO & 12.29 & - & - & 10.52 & - & - \\
 & NeurPRISE & \textbf{8.96} & \textbf{5.05} & \textbf{1.77} & \textbf{4.78} & \textbf{2.48} & \textbf{0.87} \\
\bottomrule[1.2pt]
\end{tabular}
\end{minipage}

\textbf{Distribution shift.}
\label{sec:gen_distribution}
We test whether NeurPRISE, trained on uniform scenarios, generalizes to Normal and Multimodal (MM) distributions without retraining (Tables~\ref{tab:distribution_generalization}--\ref{tab:distribution_mm}; setup details in Appendix~\ref{subsec:appendix_distribution}). NeurPRISE$_\text{uni}$ outperforms MaxSum and Neur2RO across most settings under both shifted distributions, and performs comparably to in-distribution models (NeurPRISE$_\text{norm}$, NeurPRISE$_\text{mm}$), confirming that a single uniform-trained model generalizes across distribution families.

% File: distribution_table.tex
% Updated 2026-04-28: Split into two independent side-by-side tables.

\noindent
\begin{minipage}[t]{0.48\textwidth}
\centering\small
\setlength{\tabcolsep}{2pt}
\captionof{table}{Distribution shift (Normal).}
\label{tab:distribution_generalization}
\begin{tabular}{@{}llcccc@{}}
\toprule[1.2pt]
\textsc{Prob.} & \textsc{Method} & $k{=}1$ $\downarrow$ & $k{=}2$ $\downarrow$ & $k{=}4$ $\downarrow$ & $k{=}6$ $\downarrow$ \\
\midrule[1.2pt]
\multirow{4}{*}{\rotatebox[origin=c]{90}{SEL}}
 & MaxSum          & 3.09           & 2.91           & 2.13           & 1.62 \\
 & Neur2RO         & 3.32           & -              & -              & - \\
 & NeurPRISE$_\text{uni}$   & \textbf{2.27}  & 2.18           & 1.89           & 1.56 \\
 & NeurPRISE$_\text{norm}$  & 2.27           & \textbf{2.10}  & \textbf{1.70}  & \textbf{1.35} \\
\midrule
\multirow{4}{*}{\rotatebox[origin=c]{90}{VC}}
 & MaxSum          & 4.69           & 3.92           & 2.29           & 0.72 \\
 & Neur2RO         & 4.67           & -              & -              & - \\
 & NeurPRISE$_\text{uni}$   & \textbf{4.56}  & \textbf{3.71}  & 2.39           & 0.89 \\
 & NeurPRISE$_\text{norm}$  & 4.67           & 4.07           & \textbf{2.23}  & \textbf{0.62} \\
\bottomrule[1.2pt]
\end{tabular}
\end{minipage}
\hfill
\begin{minipage}[t]{0.48\textwidth}
\centering\small
\setlength{\tabcolsep}{2pt}
\captionof{table}{Distribution shift (MM).}
\label{tab:distribution_mm}
\begin{tabular}{@{}llcccc@{}}
\toprule[1.2pt]
\textsc{Prob.} & \textsc{Method} & $k{=}1$ $\downarrow$ & $k{=}2$ $\downarrow$ & $k{=}4$ $\downarrow$ & $k{=}6$ $\downarrow$ \\
\midrule[1.2pt]
\multirow{4}{*}{\rotatebox[origin=c]{90}{SEL}}
 & MaxSum          & 1.91           & 1.50           & 1.08           & 0.77 \\
 & Neur2RO         & 1.20           & -              & -              & - \\
 & NeurPRISE$_\text{uni}$   & 1.19           & \textbf{0.82}  & \textbf{0.46}  & \textbf{0.28} \\
 & NeurPRISE$_\text{mm}$    & \textbf{1.15}  & 0.96           & 0.59           & 0.41 \\
\midrule
\multirow{4}{*}{\rotatebox[origin=c]{90}{VC}}
 & MaxSum          & 7.53           & \textbf{6.22}  & 5.19           & 4.18 \\
 & Neur2RO         & \textbf{6.49}  & -              & -              & - \\
 & NeurPRISE$_\text{uni}$   & 7.78           & 6.71           & \textbf{4.66}  & 3.84 \\
 & NeurPRISE$_\text{mm}$    & 7.62           & 6.56           & 4.92           & \textbf{3.82} \\
\bottomrule[1.2pt]
\end{tabular}
\end{minipage}

\section{Conclusion}
We propose PRISE, a problem-driven sequential lookahead heuristic for scenario reduction in 2RO, and NeurPRISE, a GNN-Transformer trained via a gain-aware KL objective to imitate it. On three problem classes, NeurPRISE achieves competitive regret among scenario-reduction and learning-based approaches, while running much faster than PRISE. It also generalizes zero-shot to larger scales, more scenarios, and distribution shifts. Future work includes cheaper labeling oracles and reducing downstream MILP overhead. Code will be released upon publication.

% Acknowledgments omitted for double-blind review.

% \clearpage
{
\small
\bibliographystyle{plainnat}
\bibliography{main}          % This tells LaTeX to look at main.bib
}

\clearpage
\appendix

% Clickable appendix-only table of contents
\startcontents[appendix]
\printcontents[appendix]{}{0}{\section*{Appendix Contents}\setcounter{tocdepth}{3}}
\clearpage

% Appendix hub file — each section in its own file for easier editing.
% To add a new appendix section, create sec/A{N}_{name}.tex and add \input here.

% A. Notation
\section{Notation}
\label{sec:notation}

For quick reference, we summarize the main symbols used throughout the paper. Core notation ($Q$, $V$, Regret) is defined in Section~\ref{sec:preliminaries}; the explicit MILP reformulations are in Appendix~\ref{subsec:milp_reformulation}.

\begin{center}
\small
\renewcommand{\arraystretch}{1.15}
\begin{tabular}{@{}l l@{}}
\toprule[1.2pt]
\textbf{Symbol} & \textbf{Meaning} \\
\midrule[1.2pt]
$x \in X$                     & first-stage decision (here-and-now) \\
$y \in F(x,\xi)$               & second-stage decision (recourse) \\
$\xi \in \Xi$                  & uncertainty realization \\
$\Xi = \{\xi_1,\dots,\xi_S\}$  & discrete uncertainty set, $|\Xi|=S$ \\
$R \subseteq \Xi$              & reduced scenario set \\
$c,\ b_\xi$                    & first- / second-stage cost vectors \\
$Q(x,\xi)$                     & recourse value, $\min_{y \in F(x,\xi)} b_\xi^\top y$ (Section~\ref{sec:preliminaries}) \\
$V(R)$                         & reduced-scenario objective value, $\min_{x\in X}\bigl[c^\top x + \max_{\xi\in R} Q(x,\xi)\bigr]$\ \ (Eq.~\ref{eq:V_def}) \\
$V(\Xi)$                       & full-scenario objective value (ground truth) \\
$Z(x)$                & full-scenario cost, $c^\top x + \max_{\xi\in\Xi} Q(x,\xi)$;\ \ $V(\Xi) = \min_{x\in X} Z(x)$ \\
$x^{(k)\star}$        & optimal first-stage solution of the reduced problem \\
$\mathrm{Regret}(R^{(k)})$ 
& $\left(Z(x^{(k)\star}) - V(\Xi)\right)/V(\Xi) \times 100$ \ \ (Eq.~\ref{eq:regret}) \\
\midrule[1.2pt]
$K$                            & user-specified scenario budget (max $|R|$) \\
$\hat{k}$                      & compression budget, i.e.\ smallest $k$ with $(V(\Xi){-}V(R^{(k)}))/V(\Xi)\le 1\%$ (Table~\ref{tab:effective_k_stats}) \\
$R_t$                & PRISE selected set at step $t$ \\
$\Delta_t$                     & PRISE marginal gain at step $t$ \\
$\mathcal{D}$                  & PRISE supervision dataset \\
\bottomrule[1.2pt]
\end{tabular}
\end{center}

% B. PRISE: Method Details (Algorithm, Monotonicity, Compression Budget)
\section{PRISE: Method Details}
\label{sec:prise_details}

\subsection{Monotonicity of the reduced-scenario objective value}
\label{subsec:prise_monotonicity}

We formalize the monotonic behavior of the reduced-scenario objective value $V(R)$, defined in Eq.~\eqref{eq:V_def}, with respect to the growth of the selected scenario set $R$.

\begin{proposition}[Monotonicity under set inclusion]
\label{prop:prise_monotonicity}
For any $R\subseteq R' \subseteq \Xi$, the objective values satisfy
\begin{equation}
V(R)\;\le\; V(R')\;\le\; V(\Xi).
\label{eq:vR_monotone}
\end{equation}
Consequently, if $\{R^{(k)}\}_{k\ge1}$ is a nested sequence with $R^{(k)}\subseteq R^{(k+1)}\subseteq \Xi$ (e.g., the PRISE construction),
then $\{V(R^{(k)})\}_{k\ge1}$ is non-decreasing, and the underestimation gap $V(\Xi)-V(R^{(k)})$ is non-increasing in $k$.
\end{proposition}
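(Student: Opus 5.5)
Fix an arbitrary first-stage decision $x\in X$ and compare the inner objectives pointwise; monotonicity of $V$ then follows by taking the minimum over $x$.

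\emph{Pointwise comparison.} For nonempty $R\subseteq R'\subseteq\Xi$, every scenario feasible in the inner maximization over $R$ is also feasible over $R'$. Hence
\[
\max_{\xi\in R}Q(x,\xi)\;\le\;\max_{\xi\in R'}Q(x,\xi),
\]
and adding $c^\top x$ to both sides preserves the inequality. Finiteness of these maxima follows from finiteness of $\Xi$ together with relatively complete recourse. The latter gives $F(x,\xi)\neq\emptyset$, so $Q(x,\xi)<+\infty$. We also assume, as implicitly required for $V$ and the regret to be well defined, that $Q(x,\xi)>-\infty$.

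\emph{Passing to the minimum.} If $f\le g$ pointwise on $X$, then $\inf_X f\le\inf_X g$. Indeed, for any $x$ we have $\inf f\le f(x)\le g(x)$; now take the infimum over $x$. This needs no attainment of the minimum, so integrality of $X$ causes no trouble. It yields $V(R)\le V(R')$, and the case $R'=\Xi$ gives $V(R')\le V(\Xi)$.

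\emph{The empty set.} The only delicate step is the convention $V(\emptyset):=0$. The formula itself would give $\min_x c^\top x$ with $\max_\emptyset=-\infty$, which is not the convention adopted. So for $R=\emptyset$ we need $0\le V(R')$ separately. I would handle this by noting that in the paper's setting all costs are nonnegative ($X\subseteq\mathbb{Z}^n_+$, $Y\subseteq\mathbb{R}^m_+$, and nonnegative costs as in the SEL/VC/CFLP instances), so every $V(R')\ge0$. Otherwise, the statement is read with $R\neq\emptyset$, which is all the consequence needs since that concerns $k\ge1$. I expect this to be the only obstacle, and it is a matter of stating the assumption rather than a genuine difficulty.

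\emph{Consequence.} For the nested sequence, applying the inequality to $R^{(k)}\subseteq R^{(k+1)}$ shows that $V(R^{(k)})$ is non-decreasing. Therefore $V(\Xi)-V(R^{(k)})$ is non-increasing, and it is nonnegative by the second inequality.
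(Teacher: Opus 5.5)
Your proof is correct and follows the paper's argument: compare $\max_{\xi\in R}Q(x,\xi)\le\max_{\xi\in R'}Q(x,\xi)$ pointwise in $x$, add $c^\top x$, take the minimum over $X$, then specialize to $R'=\Xi$ and to the nested sequence. Your separate treatment of $R=\emptyset$ goes beyond the paper, whose proof silently skips this case. Under the convention $V(\emptyset):=0$ one needs $V(R')\ge 0$, which holds for the nonnegative costs of SEL/VC/CFLP but not for general $c,b_\xi$. One small slip: with $\max_\emptyset=-\infty$ the formula would give $-\infty$, not $\min_x c^\top x$.
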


\begin{proof}
Fix any $x\in X$. Since $R\subseteq R'$, the pointwise monotonicity of the maximum gives
\begin{equation}
\max_{\xi\in R} Q(x,\xi)\;\le\;\max_{\xi\in R'} Q(x,\xi).
\label{eq:max_monotone}
\end{equation}
Adding $c^\top x$ to both sides and taking the minimum over $x \in X$ preserves the inequality, so
\begin{equation}
V(R)\;=\;\min_{x\in X}\!\left[c^\top x + \max_{\xi\in R} Q(x,\xi)\right]
\;\le\;\min_{x\in X}\!\left[c^\top x + \max_{\xi\in R'} Q(x,\xi)\right]
\;=\;V(R').
\label{eq:min_preserves}
\end{equation}
Applying the same argument with $R'\subseteq \Xi$ gives $V(R')\le V(\Xi)$, proving~\eqref{eq:vR_monotone}.
For a nested sequence $\{R^{(k)}\}$, $R^{(k)}\subseteq R^{(k+1)}$ implies $V(R^{(k)})\le V(R^{(k+1)})$ by the first inequality in~\eqref{eq:vR_monotone}.
Finally, since $V(\Xi)$ is constant, $V(\Xi)-V(R^{(k)})$ is non-increasing.
\end{proof}

\begin{remark}[Non-submodularity of $V$]
\label{rem:not_submodular}
Although PRISE empirically shows diminishing returns (Section~\ref{sec:heuristics}), the set function $V(R)$ in Eq.~\eqref{eq:V_def} is \emph{not} submodular in general. Recall that a set function $g$ is submodular if $g(S\cup\{e\})-g(S) \ge g(T\cup\{e\})-g(T)$ for all $S\subseteq T$ and $e\notin T$; informally, adding an element to a smaller set yields at least as much marginal gain as adding it to a larger set.

\paragraph{Counterexample.}
Consider a 2RO instance with three feasible first-stage decisions $X=\{a,b,c\}$ and three scenarios $\{s_1,s_2,s_3\}$. Let $f(x,\xi):=c^\top x + Q(x,\xi)$ denote the total cost (first-stage plus recourse) of decision $x$ under scenario $\xi$:
\begin{center}
\small
\begin{tabular}{@{}c ccc@{}}
\toprule[1.2pt]
$f(x,s)$ & $s_1$ & $s_2$ & $s_3$ \\
\midrule[1.2pt]
$a$ & 9 & 1 & 5 \\
$b$ & 1 & 9 & 6 \\
$c$ & 4 & 4 & 8 \\
\bottomrule[1.2pt]
\end{tabular}
\end{center}
Decisions $a$ and $b$ each hedge against one of $s_1$, $s_2$ but are exposed to the other. Decision $c$ is balanced across $s_1$ and $s_2$ (cost~4 each) but pays the highest cost under~$s_3$.

We can compute $V(R) = \min_{x\in\{a,b,c\}} \max_{s\in R} f(x,s)$ for all subsets:
\begin{center}
\small
\begin{tabular}{@{}l c c c l@{}}
\toprule[1.2pt]
$R$ & $\max_s f(a,s)$ & $\max_s f(b,s)$ & $\max_s f(c,s)$ & $V(R)$ \\
\midrule[1.2pt]
$\{s_1\}$           & 9 & 1  & 4  & 1 \quad ($b$) \\
$\{s_2\}$           & 1 & 9  & 4  & 1 \quad ($a$) \\
$\{s_3\}$           & 5 & 6  & 8  & 5 \quad ($a$) \\
$\{s_1,s_2\}$       & 9 & 9  & 4  & 4 \quad ($c$) \\
$\{s_1,s_3\}$       & 9 & 6  & 8  & 6 \quad ($b$) \\
$\{s_2,s_3\}$       & 5 & 9  & 8  & 5 \quad ($a$) \\
$\{s_1,s_2,s_3\}$   & 9 & 9  & 8  & 8 \quad ($c$) \\
\bottomrule[1.2pt]
\end{tabular}
\end{center}

\paragraph{PRISE trace on this instance.}
PRISE (Algorithm~\ref{alg:prise_supervision_fixed2}) greedily selects the candidate maximising $V(R_t \cup \{\xi_j\})$:
\begin{itemize}[nosep,leftmargin=*]
\item \textbf{Step~0} ($R_0=\varnothing$): Evaluate $V(\{s_1\}){=}1$, $V(\{s_2\}){=}1$, $V(\{s_3\}){=}5$. Select $s_3$ (highest). Gain $\Delta_0 = 5$.
\item \textbf{Step~1} ($R_1=\{s_3\}$): Evaluate $V(\{s_3,s_1\}){=}6$, $V(\{s_3,s_2\}){=}5$. Select $s_1$. Gain $\Delta_1 = 6-5 = 1$.
\item \textbf{Step~2} ($R_2=\{s_3,s_1\}$): Evaluate $V(\{s_3,s_1,s_2\}){=}8$. Select $s_2$. Gain $\Delta_2 = 8-6 = 2$.
\end{itemize}
Note that all marginal gains are strictly positive, yet the gains are \emph{non-monotone}: $\Delta_1 = 1 < 2 = \Delta_2$. The greedy trace recovers the full-scenario objective value $V(\{s_1,s_2,s_3\}) = 8$.

\paragraph{Submodularity violation.}
The marginal gain of $s_2$ \emph{increases} from $\Delta(\{s_3\}, s_2)=0$ to $\Delta(\{s_3,s_1\}, s_2)=2$ as the base set grows from $\{s_3\}\subset\{s_3,s_1\}$, violating the submodularity condition. Intuitively, when only $s_3$ is present, decision $a$ achieves $V(\{s_3\})=5$ and $V(\{s_3,s_2\})=5$---adding $s_2$ is ``free'' because $a$ already hedges against it (cost~1). But once $s_1$ is included, $a$'s high cost under $s_1$ (cost~9) forces a switch to $b$, and $s_2$ now imposes an unavoidable cost increase (since $b$ has cost~9 under $s_2$, forcing a further switch to $c$). This interaction between scenarios through the first-stage decision makes $V$ non-submodular.
\end{remark}

\subsection{PRISE Compression Budget and Convergence Comparison}
\label{subsec:prise_empirical_analysis}

We define the \emph{compression budget} $\hat{k}$ as the minimum number of scenarios needed for a method's selected subset $R^{(k)}$ to reach within 1\% of the full-scenario objective value:
\[
\hat{k} \;=\; \min\bigl\{k : (V(\Xi) - V(R^{(k)})) / V(\Xi) \leq 1\%\bigr\}.
\]
This is an analysis metric for Table~\ref{tab:effective_k_stats}, distinct from PRISE's internal stopping criterion (Algorithm~\ref{alg:prise_supervision_fixed2}), which uses marginal gain tolerance $\epsilon$. Reported as the compression ratio $\hat{k}/|\Xi|$, it measures how aggressively a method can shrink the scenario set without sacrificing solution quality.

PRISE compresses scenario sets much more aggressively than baselines. MaxSum and Random require substantially larger scenario budgets to reach comparable worst-case coverage. Gains are empirically non-increasing in 93--99\% of instances, validating the diminishing-returns assumption underlying the SWKL loss design (Section~\ref{sec:loss}).

\begin{table}[t]
\centering
%\scriptsize
% \setlength{\tabcolsep}{2pt}
\caption{
Compression ratio $\hat{k}/|\Xi|$ with budget $K{=}8$; N/C denotes the percentage of instances that did not converge within $K$ steps.}
\label{tab:effective_k_stats}
\vspace{5pt}
\resizebox{0.7\linewidth}{!}{
\begin{tabular}{@{}ll cc >{\centering\arraybackslash}p{0.7cm}>{\centering\arraybackslash}p{0.7cm}>{\centering\arraybackslash}p{0.7cm}>{\centering\arraybackslash}p{0.7cm} c@{}}
\toprule[1.2pt]
& & \multicolumn{2}{c}{$\hat{k}/|\Xi|$ (\%)$\,\downarrow$} & \multicolumn{4}{c}{\% instances with $\hat{k}/|\Xi|\le$ $\uparrow$} & N/C$\,\downarrow$ \\
\cmidrule(lr){3-4} \cmidrule(lr){5-8}
Problem & Method & Mean & Med. & 2\% & 4\% & 8\% & 12\% & (\%) \\
\midrule[1.2pt]
\multirow{3}{*}{SEL-20-50}
 & Random  & 15.3 & 16.0 &   1.2 &   1.6 &   5.2 &   8.0 & 91.2 \\
 & MaxSum  & 10.9 & 12.0 &  10.0 &  21.6 &  38.4 &  51.2 & 42.8 \\
 & PRISE   & \textbf{4.2} & \textbf{4.0} & \textbf{34.4} & \textbf{79.6} & \textbf{95.2} & \textbf{96.4} & \textbf{3.6} \\
\midrule
\multirow{3}{*}{VC-20-50}
 & Random  & 15.8 & 16.0 &   0.0 &   0.4 &   1.2 &   3.2 & 95.6 \\
 & MaxSum  & 13.9 & 16.0 &   0.4 &   4.4 &  13.6 &  25.2 & 67.2 \\
 & PRISE   & \textbf{7.8} & \textbf{8.0} & \textbf{1.2} & \textbf{18.8} & \textbf{71.6} & \textbf{91.6} & \textbf{5.6} \\
\bottomrule[1.2pt]
\end{tabular}
}
\end{table}

\begin{figure}[h]
    \centering
    \begin{subfigure}[t]{0.48\linewidth}
        \centering
        \includegraphics[width=\linewidth]{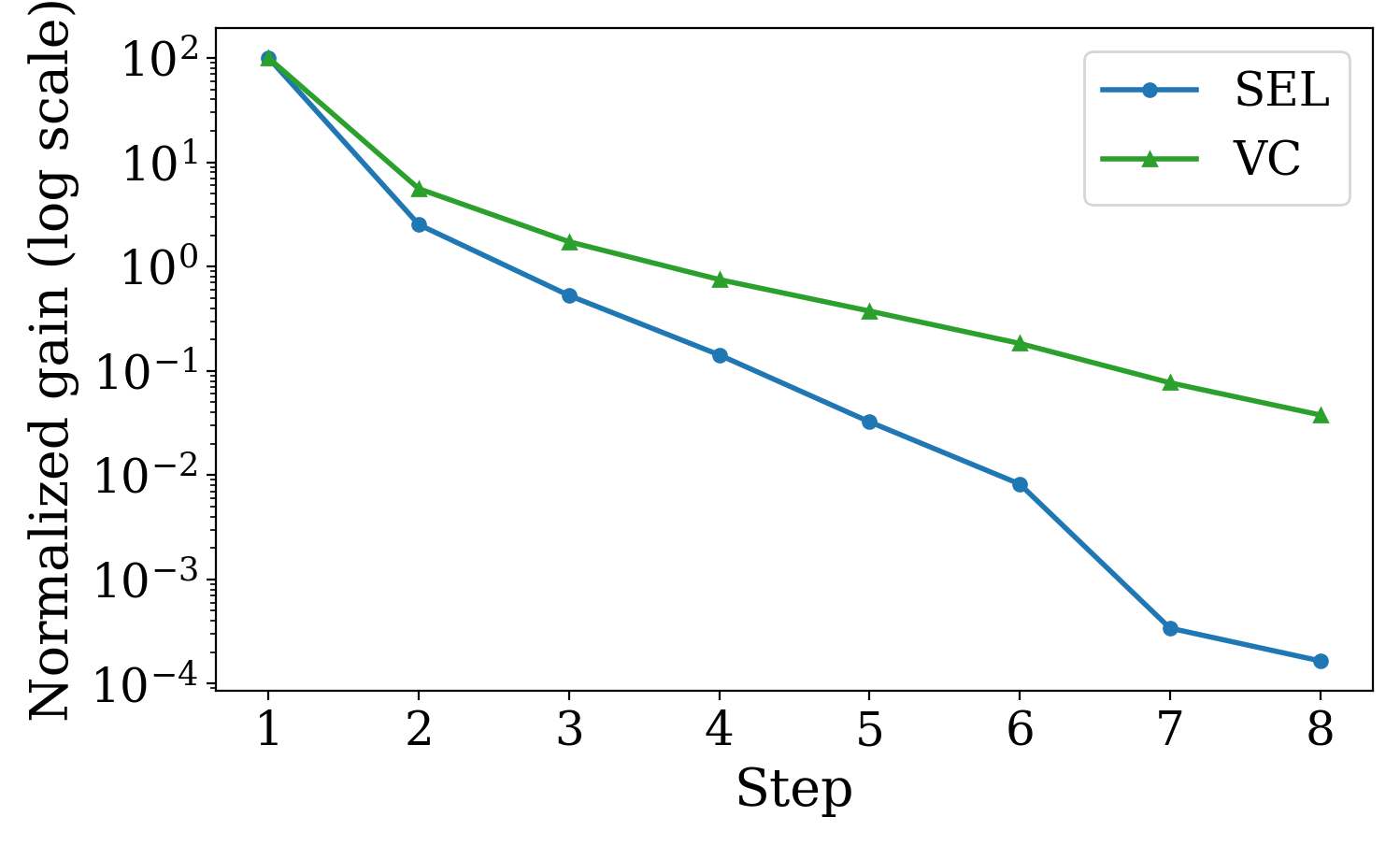}
        \caption{Mean marginal gain across iterations.}
        \label{fig:prise_marginal_gains}
    \end{subfigure}
    \hfill
    \begin{subfigure}[t]{0.48\linewidth}
        \centering
        \includegraphics[width=\linewidth]{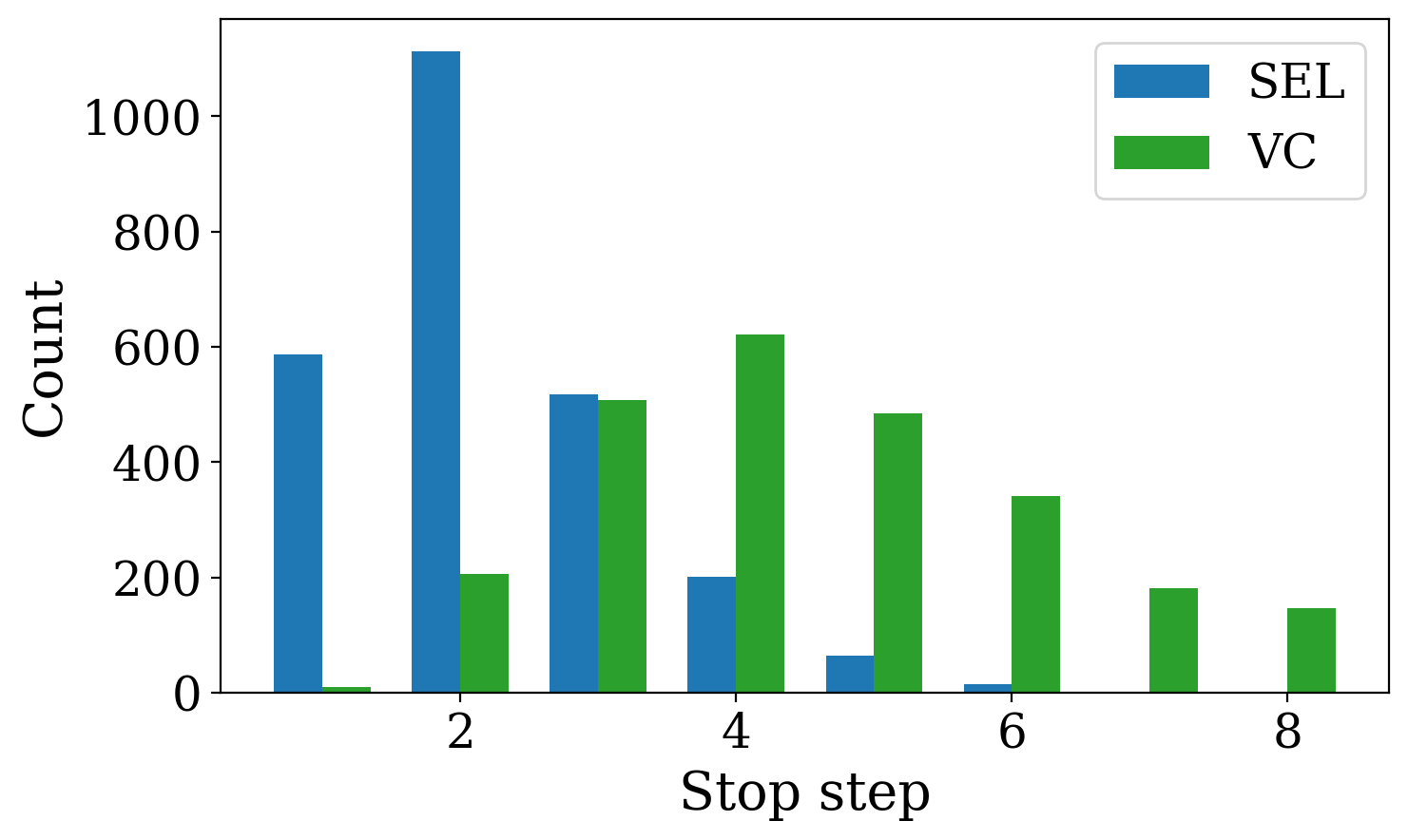}
        \caption{Distribution of PRISE stop steps.}
        %(internal stopping criterion).}
        \label{fig:prise_stop_step}
    \end{subfigure}
    \caption{Empirical analysis of PRISE on small-scale instances. (a)~Marginal gains empirically exhibit diminishing returns on average; (b)~Distribution of PRISE's stop step.}
    \label{fig:prise_analysis}
\end{figure}

\paragraph{Labeling cost.}

PRISE is not designed as a fast solver---it is a labeling oracle. At each step, PRISE evaluates the marginal gain of every remaining candidate scenario, each requiring a full MILP solve; this yields $O(\hat{k}{\cdot}S)$ solves per instance, where $\hat{k}$ is the number of selection steps. Take Table~\ref{tab:neurips_full_results_small} as an example, it reports measured PRISE wall-clock times on 250 test instances using 40-thread Gurobi (same hardware as Section~\ref{sec:experiments}). Scaling to the 2{,}000 training and 250 validation instances gives an estimated labeling cost of approximately 14.5\,wall-clock hours for both small-scale problems combined (SEL ${\approx}$0.33\,h, VC ${\approx}$14.2\,h). This is a one-time offline cost; once labels are generated, NeurPRISE training and inference require no further PRISE calls. At larger scales, PRISE labeling cost grows substantially, but the generalization results in Section~\ref{sec:gen_size} show that small-scale labels transfer effectively, avoiding the need for expensive large-scale labeling across the tested problems.

\subsection{Connection to Column-and-Constraint Generation}
\label{subsec:prise_vs_ccg}
PRISE's iterative scenario-addition structure is motivated by column-and-constraint generation (CCG)~\cite{zeng2013solving,zhao2012exact}, a standard exact method for 2RO. Both methods incrementally expand a scenario subset, but they differ fundamentally in their scenario-selection rule and algorithmic purpose:

\begin{itemize}[leftmargin=*,itemsep=0.3em]
\item \textbf{CCG} solves the restricted master problem $V(\widehat{R})$ to obtain a first-stage solution $x^\star(\widehat{R})$, then adds the scenario that is worst for this \emph{fixed} solution: $\xi^{\mathrm{CCG}} \in \arg\max_{\xi \in \Xi} Q(x^\star(\widehat{R}),\xi)$. This adversarial separation step is designed to close the optimality gap and certify convergence to the exact full-scenario value $V(\Xi)$.

\item \textbf{PRISE} instead \emph{re-solves} the reduced problem on $R_t \cup \{\xi_j\}$ and selects the scenario with the largest marginal gain $V(R_t \cup \{\xi_j\}) - V(R_t)$. This criterion measures the scenario's post-reoptimization impact, rather than its adversariality against a fixed incumbent solution.
\end{itemize}

\noindent The key distinction is therefore in the next-scenario selection logic: CCG asks ``which scenario is worst for the current solution?'' whereas PRISE asks ``which scenario pushes the reduced objective furthest after re-optimization?'' CCG is an exact procedure that converges to $V(\Xi)$, while PRISE is a heuristic that builds a compact reduced set for downstream use.

% C. Optimization Problems (MILP Reformulation, Problem Descriptions)
\section{Optimization Problems}
\label{subsec:problem_formulations}

\subsection{Deterministic-Equivalent MILP Reformulation}
\label{subsec:milp_reformulation}

The deterministic-equivalent MILP reformulation follows \cite{zeng2013solving}.
For finite discrete $\Xi$, the problem~\eqref{eq:2ro_formulation} can be reformulated by adding a copy of the recourse variables and constraints for each scenario:
\begin{equation}
\begin{aligned}
\min_{x,\, \eta,\, \{y^{(s)}\}} \quad & c^\top x + \eta \\
\text{s.t. } \quad & Ax \geq d, \\
& \eta \geq b_{\xi_s}^\top y^{(s)}, \quad \forall \xi_s \in \Xi, \\
& Gy^{(s)} \geq h - Ex - M\xi_s, \quad \forall \xi_s \in \Xi, \\
& x \in X,\; y^{(s)} \in Y,\; \eta \in \mathbb{R},
\end{aligned}
\label{eq:big_milp}
\end{equation}
where $y^{(s)}$ is the recourse decision for scenario $\xi_s$.
Given a reduced scenario set $R^{(k)} \subseteq \Xi$ with $|R^{(k)}| = k \ll S$, the \emph{reduced MILP} is obtained by replacing $\Xi$ with $R^{(k)}$ in~\eqref{eq:big_milp}.

\subsection{Problem Descriptions}
\label{subsec:problem_descriptions}

\paragraph{Selection Problem (SEL).}
Each of $n$ items has an uncertain cost; the goal is to select a fixed-size subset minimizing total cost. First-stage decisions commit to a partial selection, while the remaining choices are deferred to the second stage after costs are realized. Both stages involve binary decisions. Following \cite{goerigk2023optimal}, we fix the selection size to $\lfloor n/2\rfloor$ and sample costs uniformly from $\{1, \dots, 100\}$; 
the selection size enters as the RHS feature of the corresponding constraint node (Table~\ref{tab:feature_summary}), so the architecture accommodates other values without modification. Instances are denoted SEL-$n$-$s$.

Given $n$ items and scenario set $\Xi=\{\xi_1,\dots,\xi_S\}$, let $c_i$ be the first-stage cost and $d_i^{(s)}$ the second-stage cost of item $i$ under scenario $s$. The MILP reformulation is:
\begin{equation}
\begin{aligned}
\min_{x,\,\eta,\,\{y^{(s)}\}} \quad & \textstyle\sum_{i=1}^{n} c_i\, x_i + \eta \\
\text{s.t.}\quad
& \eta \ge \textstyle\sum_{i=1}^{n} d_i^{(s)}\, y_i^{(s)}, & \forall s\in[S],\\
& \textstyle\sum_{i=1}^{n}\bigl(x_i + y_i^{(s)}\bigr) = \lfloor n/2\rfloor, & \forall s\in[S],\\
& x_i + y_i^{(s)} \le 1, & \forall i\in[n],\;\forall s\in[S],\\
& x_i,\, y_i^{(s)} \in \{0,1\}.
\end{aligned}
\label{eq:sel_milp}
\end{equation}

\paragraph{Vertex Cover (VC).}
Given a graph $G=(V,E)$ with uncertain node costs, the goal is to select a minimum-cost subset of nodes covering every edge. In the 2RO setting, first-stage decisions commit to a partial cover, strategically reserving the remainder for the second stage; this allows the decision-maker to avoid committing to currently expensive nodes whose costs may be less critical under the realized scenario. Following \cite{goerigk2023optimal}, we generate random graphs with edge probability $10/n$ (average degree $\approx 10$) and uniform costs from $\{1,\dots,100\}$. Instances are denoted VC-$n$-$s$.

Let $c_i$ and $d_i^{(s)}$ be the first- and second-stage costs of node $i$:
\begin{equation}
\begin{aligned}
\min_{x,\,\eta,\,\{y^{(s)}\}} \quad & \textstyle\sum_{i\in V} c_i\, x_i + \eta \\
\text{s.t.}\quad
& \eta \ge \textstyle\sum_{i\in V} d_i^{(s)}\, y_i^{(s)}, & \forall s\in[S],\\
& (x_i + y_i^{(s)}) + (x_j + y_j^{(s)}) \ge 1, & \forall(i,j)\in E,\;\forall s\in[S],\\
& y_i^{(s)} \le 1 - x_i, & \forall i\in V,\;\forall s\in[S],\\
& x_i,\, y_i^{(s)} \in \{0,1\}.
\end{aligned}
\label{eq:vc_milp}
\end{equation}

% CFLP formulation moved to Appendix (sec/A_cflp_appendix.tex)

% D. NeurPRISE Architecture and Training (Encoding, Training Details, E+F)
\section{NeurPRISE Architecture and Training}
\label{sec:architecture}

\subsection{Encoding}
\label{subsec:graph_encoding}

\subsubsection{Bipartite Graph Representation}

Following \cite{gasse2019exact}, we represent each scenario-level MILP subproblem as a bipartite graph $G_j=(V_c \cup V_v, E)$, where $V_c$ denotes constraint nodes, $V_v$ denotes variable nodes, and edges connect variables to the constraints in which they appear (Fig.~\ref{fig:graph_bipartite}). While the graph topology is fixed across scenarios of the same instance, node features vary with scenario-dependent coefficients (e.g., costs, demands). We use this standard constraint--variable bipartite design directly.

\subsubsection{GINE Architecture}

We employ a Graph Isomorphism Network with edge features (GINE)~\cite{xu2018how, hu2019strategies} as the GNN backbone. The update rule for node $i$ at layer $\ell$ is:
\begin{equation}
\mathbf{g}_i^{(\ell+1)} = \mathrm{MLP}^{(\ell)}\!\left( (1+\epsilon^{(\ell)}) \cdot \mathbf{g}_i^{(\ell)} + \sum_{j \in \mathcal{N}(i)} \mathrm{ReLU}\!\left(\mathbf{g}_j^{(\ell)} + \mathbf{e}_{ji}\right) \right),
\end{equation}
where $\mathbf{e}_{ji}$ is the edge feature (constraint coefficient) and $\epsilon^{(\ell)}$ is a learnable scalar. We use two GINE layers followed by global mean pooling over the node index within each scenario graph to obtain per-scenario embeddings $\mathbf{h}_j \in \mathbb{R}^{d_s}$.

\subsubsection{Problem-Specific Feature Engineering}

% File: feature_table.tex
% Feature engineering summary for SEL, VC, CFLP

\begin{table}[ht]
\centering
\caption{Node and edge features used in the GNN encoder by problem class. ``Type ind.'' denotes one-hot node-kind or variable-role indicators.}
\label{tab:feature_summary}
\small
\vspace{5pt}
\begin{tabular}{@{}l l c l c@{}}
\toprule[1.2pt]
\textsc{Prob.} & \textsc{Node Features} & $d_{\text{node}}$ & \textsc{Edge Features} & $d_{\text{edge}}$ \\
\midrule[1.2pt]
SEL  & type ind.\,($\times 4$), obj.\ coeff, scenario cost, RHS, cosine sim. & 8 & constraint coeff. & 1 \\
VC   & type ind.\,($\times 4$), obj.\ coeff, scenario cost, RHS, cosine sim., degree & 9 & constraint coeff. & 1 \\
\bottomrule[1.2pt]
\end{tabular}
\end{table}

\paragraph{Feature Engineering.}
Variable node features include a type indicator (distinguishing $x$, $y$, and $\eta$ variables), first-stage cost $c_i$, and scenario-specific second-stage cost $d_i^{(s)}$. Constraint node features include the right-hand side (RHS) value and a \emph{cosine similarity} between the constraint's coefficient row and the objective vector, measuring how aligned each constraint is with the optimization direction. Edge features are coefficient matrix entries. For VC, variable nodes also include the \emph{graph degree} (incident edge count), capturing structural importance.
\begin{figure}[t]
\centering
\resizebox{0.55\textwidth}{!}{% Constraint-Variable Bipartite Graph (SEL/VC)
% Usage: \resizebox{\linewidth}{!}{\input{fig/graph_bipartite}}
% NOTE: No \begin{figure} wrapper — the caller provides the figure environment.
\begin{tikzpicture}[
    >=stealth,
    inputbox/.style={
        draw, rounded corners=2pt,
        minimum height=0.7cm,
        text width=3.4cm,
        align=center,
        font=\LARGE,
        inner sep=2pt,
        fill=gray!10, draw=gray!50, thick
    },
    procbox/.style={
        draw, rounded corners=3pt, minimum height=1.0cm,
        minimum width=2.4cm, align=center, font=\LARGE,
        fill=#1!10, draw=#1!50, thick
    },
    procbox/.default={blue},
    arr/.style={-, thick, color=black!70},
    lbl/.style={font=\LARGE\itshape, text=black!60},
    edgelbl/.style={font=\LARGE, text=black!70, fill=white, inner sep=1pt},
]

% Force consistent bounding box (match graph_cflp)
\useasboundingbox (-5.2, -3.0) rectangle (10.2, 4.5);

% Graph title
\node[font=\LARGE\bfseries, text=black!80] at (2.5, 4.0) {$G^{(s)}$};

% Column headers
\node[lbl, font=\LARGE\itshape] at (0, 3.5) {$V_v$ (variables)};
\node[lbl, font=\LARGE\itshape] at (5, 3.5) {$V_c$ (constraints)};

% --- Variable nodes (LEFT column) ---
\node[procbox=teal] (x1) at (0, 2.5) {$x_1$};
\node[procbox=teal] (x2) at (0, 1.5) {$x_2$};
\node[font=\LARGE, text=black!60] at (0, 0.7) {$\vdots$};
\node[procbox=teal] (xn) at (0, 0.0) {$x_n$};
\node[procbox=teal] (y1) at (0,-1.5) {$y_1$};
\node[font=\LARGE, text=black!60] at (0, -2.1) {$\vdots$};
\node[procbox=teal] (yn) at (0,-2.5) {$y_n$};

% Variable feature annotations
\node[inputbox, anchor=east] at ([xshift=-0.4cm]x1.west) {$[\mathrm{type},\, c_i,\, 0]$};
\node[inputbox, anchor=east] at ([xshift=-0.4cm]xn.west) {$[\mathrm{type},\, c_i,\, 0]$};
\node[inputbox, anchor=east] at ([xshift=-0.4cm]yn.west) {$[\mathrm{type},\, 0,\, d_i^{(s)}]$};

% --- Constraint nodes (RIGHT column) ---
\node[procbox=orange] (c1) at (5, 2.5)  {$c_1$};
\node[procbox=orange] (c2) at (5, 0.5)  {$c_2$};
\node[font=\LARGE, text=black!60] at (5, -0.7) {$\vdots$};
\node[procbox=orange] (cm) at (5,-1.8)  {$c_m$};

% Constraint feature annotations
\node[inputbox, anchor=west] at ([xshift=0.4cm]c1.east) {$[\mathrm{type},\, \mathrm{rhs},\, \mathrm{sim}]$};
\node[inputbox, anchor=west] at ([xshift=0.4cm]c2.east) {$[\mathrm{type},\, \mathrm{rhs},\, \mathrm{sim}]$};
\node[inputbox, anchor=west] at ([xshift=0.4cm]cm.east) {$[\mathrm{type},\, \mathrm{rhs},\, \mathrm{sim}]$};

% --- Edges (undirected, bipartite) ---
% Show representative connections (not all-to-all for clarity)
\draw[arr] (x1.east) -- (c1.west) node[edgelbl, pos=0.3, above] {$A_{ji}$};
\draw[arr] (x2.east) -- (c1.west);
\draw[arr] (x2.east) -- (c2.west);
\draw[arr] (xn.east) -- (c2.west);
\draw[arr] (y1.east) -- (c2.west);
\draw[arr] (x1.east) -- (cm.west);
\draw[arr] (y1.east) -- (cm.west);

\end{tikzpicture}}
\caption{Constraint--variable bipartite graph for a scenario MILP (SEL/VC), where variables form $V_v$ and constraints form $V_c$. Edges carry coefficient features $A_{ji}$.}
\label{fig:graph_bipartite}
\end{figure}

% Encoding ablation moved to CFLP appendix (sec/A_cflp_appendix.tex)

\subsection{Training Details}
\label{subsec:training_details}

\begin{minipage}[t]{0.49\textwidth}

Table~\ref{tab:hyperparameters} lists the default architecture, training, loss, and evaluation settings.
\paragraph{Input normalization.}
Before feeding data to the GNN encoder, we apply per-instance scale normalization to floating-point features. The normalization is organized by semantic feature groups: features with the same unit and decision meaning are normalized jointly, while heterogeneous features are normalized separately by their own scale. This rule removes dependence on the magnitude of each instance while preserving relative magnitudes that are meaningful for decision making.

The deterministic first-stage cost vector $c$ and the scenario-dependent second-stage cost matrix $D=[d^{(1)},\dots,d^{(S)}]$ represent costs of the same items or nodes across two decision stages. They therefore form one semantic cost group and are normalized with a shared denominator:

\end{minipage}
\hfill
\begin{minipage}[t]{0.49\textwidth}
\vspace{-8pt}
\centering\scriptsize
\captionof{table}{Default NeurPRISE hyperparameters.}
\label{tab:hyperparameters}
\begin{tabular}{@{}llr@{}}
\toprule[1.2pt]
\textsc{Component} & \textsc{Parameter} & \textsc{Value} \\
\midrule[1.2pt]
\multirow{3}{*}{GINE encoder}
 & Hidden dimension & 128 \\
 & Output dimension $d_s$ & 64 \\
 & Number of layers & 2 \\
\midrule
\multirow{4}{*}{Transformer encoder}
 & Embedding dimension $d$ & 64 \\
 & Feed-forward dimension & 128 \\
 & Number of heads & 8 \\
 & Number of layers & 2 \\
\midrule
\multirow{3}{*}{Scoring decoder}
 & Number of heads $H$ & 4 \\
 & Head dimension $d_k$ & 32 \\
 & Scoring MLP & $H \to 4H \to 1$ \\
\midrule
\multirow{6}{*}{Training}
 & Optimizer & AdamW \\
 & Learning rate & $6 \times 10^{-4}$ \\
 & Weight decay & $10^{-2}$ \\
 & Batch size & 32 \\
 & Max epochs & 200 \\
 & Early stopping patience & 10 \\
\midrule
\multirow{3}{*}{Loss (SWKL)}
 & Temperature $\tau$ & 5 \\
 & Log-compression & $\log(1 + \Delta)$ \\
 & Precision & bfloat16 (AMP) \\
\midrule
\multirow{2}{*}{Evaluation}
 & MIP gap tolerance & $10^{-4}$ \\
 & Gurobi threads & 40 \\
\bottomrule[1.2pt]
\end{tabular}
\end{minipage}

\begin{equation}
\tilde{c} = \frac{c}{\|[c;\,\mathrm{vec}(D)]\|_\infty + \epsilon}, \qquad
\tilde{D} = \frac{D}{\|[c;\,\mathrm{vec}(D)]\|_\infty + \epsilon},
\label{eq:scale_norm}
\end{equation}

where $\epsilon=10^{-8}$ avoids division by zero. The shared denominator preserves the relative magnitude between first- and second-stage costs, which is important for the model to assess scenario severity.

% Appendix E and F split into separate files for maintainability
\section{Additional Experimental Results}
\label{sec:additional_results}

\subsection{Generalization}
\label{subsec:appendix_generalization}

This section provides supplementary details for the generalization experiments in the main paper. Distribution shift setup and instance generation protocols are described below (results in Section~\ref{sec:gen_distribution}); training-set scale analysis follows in Section~\ref{subsubsec:generalization_transfer}.

\subsubsection{Distribution Shift}
\label{subsec:appendix_distribution}

\paragraph{Motivation and setup.}
In practice, the scenario distribution at deployment may differ from the training distribution. A model that overfits to the training distribution would require retraining for each new regime. We test whether NeurPRISE, trained on the uniform distributions described in Section~\ref{sec:experiments}, generalizes to structurally different scenario families without retraining. We also train an in-distribution model on each target distribution (NeurPRISE$_\text{norm}$, NeurPRISE$_\text{mm}$) to quantify the gap from not having target-distribution training data. For Neur2RO, we use distribution-specific models trained on the target distribution at small scale, giving Neur2RO the advantage of in-distribution training data. We test robustness to two non-uniform families:

\begin{itemize}[leftmargin=*,itemsep=0.35em,topsep=0.25em]
\item \textbf{Normal.}
Per-node means $\mu_v \sim \mathrm{Uniform}[25,75]$ and standard deviations $\sigma_v = \mathrm{clip}(0.15\,\mu_v,\, 3,\, 15)$ are sampled once per problem size. Scenario costs are drawn as $\mathcal{N}(\mu_v, \sigma_v)$, clipped to $[1, 100]$. This creates a unimodal distribution shift from uniform training data.

\item \textbf{Multimodal (MM).}
Following the instance generation protocol of \cite{goerigk2024data}, we sample an instance-specific number of modes $K \sim \mathrm{Uniform}\{3,\dots,8\}$. For each mode $k \in [K]$, per-node midpoints $\mu_k \in [25,75]$ and relative deviations $\delta_k \in [0.1, 0.5]$ are drawn uniformly. Each scenario picks a mode $k$ at random and then samples node costs uniformly in $[(1{-}\delta_k)\mu_k,\,(1{+}\delta_k)\mu_k]$. This creates clustered demand regimes and tests whether the learned policy can identify important scenarios across distinct cost patterns.
\end{itemize}

Results are reported in the main paper (Section~\ref{sec:gen_distribution}, Tables~\ref{tab:distribution_generalization}--\ref{tab:distribution_mm}).

\subsubsection{Do We Need Target-Scale Training?}
\label{subsubsec:generalization_transfer}

Models trained on small instances (2{,}000 out of 2{,}500) are compared against models trained directly on the target scale (400 out of 500) in Table~\ref{tab:generalization_merged}. The small-scale model performs comparably to the target-scale model across both target scales, confirming that the architecture captures transferable structural features rather than scale-specific patterns.

A natural follow-up is whether the small-scale model's advantage comes from architectural transfer or the $5\times$ data advantage ($2{,}000$ vs.\ $400$ instances). Table~\ref{tab:data_size_ablation} isolates this factor by varying the training-set size at fixed small scale ($N \in \{500, 1500, 2500\}$, 80/10 train/val split). For SEL, regret decreases with both training-set size and scenario budget; at $k{\ge}2$, $N{=}1{,}500$ already approaches full-model quality. For VC, performance is similar across all three training-set sizes. Overall, the model is sample-efficient: $N{=}1{,}500$ instances yield competitive results for both problems, and even $N{=}500$ achieves strong generalization regret.

% File: generalization_sidebyside_appendix.tex
% Tables 12 (generalization) and 13 (data size ablation) side-by-side.

\noindent
\begin{minipage}[t]{0.49\textwidth}
\centering
\scriptsize
\setlength{\tabcolsep}{1.5pt}
\captionof{table}{Small-scale versus target-scale training comparison.}
\label{tab:generalization_merged}
\renewcommand\arraystretch{1.3}
\begin{tabular}{@{}ll cccc cccc@{}}
\toprule[1.2pt]
& & \multicolumn{4}{c}{\shortstack{\textsc{Medium} $\downarrow$}} & \multicolumn{4}{c}{\shortstack{\textsc{Large} $\downarrow$}} \\
\cmidrule(lr){3-6} \cmidrule(lr){7-10}
\textsc{Pr.} & \textsc{Model} & $k{=}1$ & $k{=}2$ & $k{=}4$ & $k{=}6$ & $k{=}1$ & $k{=}2$ & $k{=}4$ & $k{=}6$ \\
\midrule[1.2pt]
\multirow{2}{*}{SEL} & Sm. & 5.08 & \textbf{2.72} & \textbf{1.51} & \textbf{0.88} & 4.86 & \textbf{3.37} & \textbf{1.93} & \textbf{1.39} \\
 & Tgt. & \textbf{4.38} & 4.12 & 2.19 & 1.76 & \textbf{4.41} & 3.67 & 2.55 & 1.67 \\
\midrule
\multirow{2}{*}{VC} & Sm. & \textbf{15.17} & 10.06 & \textbf{5.81} & 3.98 & \textbf{15.05} & 9.09 & \textbf{5.70} & 5.09 \\
 & Tgt. & 15.71 & \textbf{9.44} & 6.10 & \textbf{3.57} & 15.29 & \textbf{8.60} & 6.12 & \textbf{4.72} \\
\bottomrule[1.2pt]
\end{tabular}
\end{minipage}
\hfill
\begin{minipage}[t]{0.49\textwidth}
\centering
\scriptsize
\setlength{\tabcolsep}{1.5pt}
\captionof{table}{Effect of training-set size on in-distribution and generalization performance.}
\label{tab:data_size_ablation}
\begin{tabular}{@{}ll cccc cccc@{}}
\toprule[1.2pt]
& & \multicolumn{4}{c}{\shortstack{In-Distribution $\downarrow$}} & \multicolumn{4}{c}{\shortstack{Generalization $\downarrow$}} \\
\cmidrule(lr){3-6} \cmidrule(lr){7-10}
\textsc{Pr.} & $N$ & $k{=}1$ & $k{=}2$ & $k{=}4$ & $k{=}6$ & $k{=}1$ & $k{=}2$ & $k{=}4$ & $k{=}6$ \\
\midrule[1.2pt]
\multirow{3}{*}{\rotatebox[origin=c]{90}{SEL}} & 500 & \textbf{3.54} & 3.08 & 2.05 & 1.41 & \textbf{4.49} & 3.85 & 1.94 & 1.68 \\
 & 1500 & 5.54 & \textbf{2.22} & 0.87 & 0.57 & 4.77 & 3.27 & 1.72 & 1.01 \\
 & 2500 & 5.59 & 2.18 & \textbf{0.79} & \textbf{0.42} & 5.08 & \textbf{2.72} & \textbf{1.51} & \textbf{0.88} \\
\midrule
\multirow{3}{*}{\rotatebox[origin=c]{90}{VC}} & 500 & 14.44 & 9.26 & 4.50 & 2.97 & \textbf{14.20} & \textbf{9.06} & 6.20 & \textbf{3.65} \\
 & 1500 & \textbf{14.42} & \textbf{8.92} & 4.41 & 2.38 & 15.55 & 10.04 & \textbf{5.59} & 3.95 \\
 & 2500 & 14.75 & 9.36 & \textbf{4.20} & \textbf{2.34} & 15.17 & 10.06 & 5.81 & 3.98 \\
\bottomrule[1.2pt]
\end{tabular}
\end{minipage}

\subsection{Seed Variance Analysis}
\label{subsec:seed_variance}

To assess sensitivity to training stochasticity, we train the default model five times with seeds $\{1, 2, 3, 4, 42\}$, keeping data and PRISE labels fixed (Table~\ref{tab:seed_variance}). Standard deviations are consistently small relative to mean regret and far smaller than the gaps between NeurPRISE and the strongest baselines, confirming that the reported advantages are robust to training randomness.

% File: seed_variance_table.tex
% Updated 2026-04-24: Gap → Regret metric.

\begin{table*}[ht]
\centering
\caption{Seed variance analysis for NeurPRISE across five training seeds.}
\label{tab:seed_variance}
\begin{small}
\setlength{\tabcolsep}{2pt}
\begin{tabular}{@{}ll cccc@{}}
\toprule[1.2pt]
\textsc{Prob.} & Seed & $k{=}1$ $\downarrow$ & $k{=}2$ $\downarrow$ & $k{=}4$ $\downarrow$ & $k{=}6$ $\downarrow$ \\
\midrule[1.2pt]
\multirow{6}{*}{\rotatebox[origin=c]{90}{SEL}}
 & 1  & 5.77 & 2.21 & 0.80 & 0.56 \\
 & 2  & 5.89 & 2.36 & 0.77 & 0.54 \\
 & 3  & 6.24 & 2.35 & 0.81 & 0.52 \\
 & 4  & 5.80 & 2.39 & 0.97 & 0.58 \\
 & 42 & 5.59 & 2.18 & 0.79 & 0.42 \\
\cmidrule(l){2-6}
 & \textbf{Avg} & \textbf{5.86{\scriptsize$\pm$0.22}} & \textbf{2.30{\scriptsize$\pm$0.09}} & \textbf{0.83{\scriptsize$\pm$0.07}} & \textbf{0.52{\scriptsize$\pm$0.06}} \\
\midrule
\multirow{6}{*}{\rotatebox[origin=c]{90}{VC}}
 & 1  & 14.42 & 9.28 & 4.52 & 2.77 \\
 & 2  & 14.03 & 8.55 & 4.44 & 2.71 \\
 & 3  & 13.89 & 8.86 & 4.15 & 2.80 \\
 & 4  & 14.11 & 8.88 & 4.30 & 2.81 \\
 & 42 & 14.75 & 9.36 & 4.20 & 2.34 \\
\cmidrule(l){2-6}
 & \textbf{Avg} & \textbf{14.24{\scriptsize$\pm$0.31}} & \textbf{8.99{\scriptsize$\pm$0.30}} & \textbf{4.32{\scriptsize$\pm$0.14}} & \textbf{2.69{\scriptsize$\pm$0.18}} \\
\bottomrule[1.2pt]
\end{tabular}
\end{small}
\end{table*}

\subsection{Time-Budgeted Exact Solver Comparison}
\label{subsec:time_budget}

Both NeurPRISE and a time-budgeted MILP are approximate solvers; we compare them under equal wall-clock budgets at medium and large scale.

%\paragraph{Setup.}
For each scenario budget $k \in \{4, 6\}$, we measure NeurPRISE's per-instance time (including neural inference and reduced-MILP solving) and use it as the Gurobi time limit for the time-budgeted MILP on the full scenario problem. We omit SEL because NeurPRISE's inference time is too short to yield a meaningful solver budget. %\paragraph{Results.}
Table~\ref{tab:time_budget} shows that NeurPRISE outperforms the time-budgeted MILP in VC in medium and large scale. This confirms that NeurPRISE is most valuable when the time budget is tight relative to the full-scenario MILP.

% File: time_budget_table.tex
% Updated 2026-04-26: VC + CFLP at medium and large, k=4,6 only. Removed 5x rows.

\begin{table*}[ht]
\centering
\caption{Time-budgeted comparison. Both methods receive the same per-instance wall-clock budget, set to NeurPRISE's end-to-end time. Regret is defined in Eq.~\ref{eq:regret}.}
\label{tab:time_budget}
\begin{small}
\begin{tabular}{@{}ll c c cc@{}}
\toprule[1.2pt]
\multirow{2}{*}{\textsc{Prob.}} & \multirow{2}{*}{\textsc{Scale}}
  & \multirow{2}{*}{$k$}
  & \multirow{2}{*}{\textsc{Budget}}
  & \multicolumn{2}{c}{\textsc{Regret (\%) $\downarrow$}} \\
\cmidrule(lr){5-6}
& & & & \textsc{NeurPRISE} & \textsc{Time-budgeted MILP} \\
\midrule[1.2pt]
\multirow{4}{*}{VC}
 & Medium & 4 &  1\,s & \textbf{6.10}  & 16.43 \\
 & Medium & 6 &  2\,s & \textbf{3.57}  & 14.98 \\
 & Large  & 4 &  1\,s & \textbf{6.12}  & 34.45 \\
 & Large  & 6 &  2\,s & \textbf{4.72}  & 23.91 \\
\bottomrule[1.2pt]
\end{tabular}
\end{small}
\end{table*}

\subsection{Speed--Quality Tradeoff via Solver Tolerance}
\label{sec:solver_tolerance}

NeurPRISE's runtime is dominated by the downstream exact MILP solver, not the neural inference (Table~\ref{tab:neurips_full_results_medium}).
Since NeurPRISE already selects high-quality worst-case scenarios, the downstream solver may not need to solve the reduced MILP to full precision to obtain a good first-stage decision; we investigate whether relaxing the MIP gap tolerance yields a favorable speed--quality trade-off.
We include Neur2RO as a strong learning-based runtime baseline for reference, since it is among the fastest baselines. Figure~\ref{fig:mipgap_pareto} sweeps tolerance levels on VC at medium and large scale. NeurPRISE provides a tunable speed--quality frontier: relaxing tolerance from exact to 25\% reduces solve time while trading regret.
Beyond moderate tolerance, additional solver effort yields diminishing regret improvement: since $V(R^{(k)})$ is itself an approximation of $V(\Xi)$, solving the reduced problem to full precision does not guarantee the best first-stage decision.

\begin{figure}[ht]
\centering
\includegraphics[width=\linewidth]{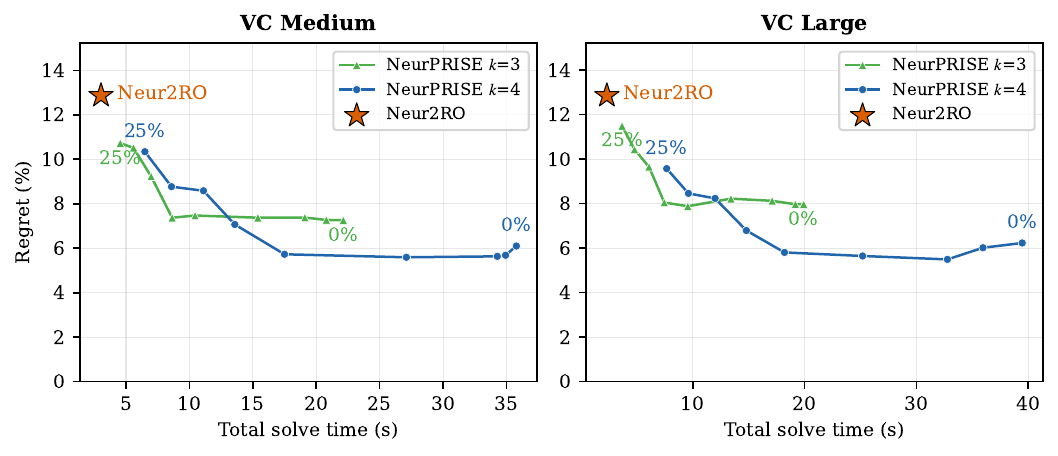}
% \vspace{-5pt}
\caption{Regret (\%) vs.\ total solve time (50 instances) at different MIP gap tolerances. 
%Neur2RO reference shown as star.
}
\label{fig:mipgap_pareto}
\end{figure}

\subsection{Scenario-Count Growth Control}
\label{subsec:runtime_scaling}

NeurPRISE controls runtime growth as the original scenario set expands. 
Although neural inference scores all $S$ scenarios and therefore scales 
approximately linearly with $S$, the downstream MILP is solved only on the 
selected $k$ scenarios.

Medium and large instances in Table~\ref{tab:neurips_full_results_medium} are
generated with different random seeds and scenario counts ($S{=}50$ and
$S{=}100$), so their runtimes conflate intrinsic instance hardness
with the scenario-count effect. To isolate the latter, we take the 50 medium
test instances (native $S{=}50$) and the 50 large test instances
(native $S{=}100$), vary $S$ by truncating each scenario set, and
fix the reduction budget at $k{=}6$. Tables~\ref{tab:scenario_runtime_med}
and~\ref{tab:scenario_runtime_lrg} report runtime ratios relative to the
smallest $S$ in each population.

NeurPRISE's end-to-end runtime has two components: (i) neural inference, which
scores all $S$ scenarios via a forward pass; and (ii) the MILP solve,
performed on the $k$-scenario reduced problem. The first component
grows with $S$, while the second is governed by $k$. Since the MILP
solve dominates at $k{=}6$, total runtime grows much more slowly than the
original scenario count.

\noindent
\begin{minipage}[t]{0.48\textwidth}
\centering
\captionof{table}{Medium instances.}
\label{tab:scenario_runtime_med}
\begin{small}
\setlength{\tabcolsep}{2pt}
\begin{tabular}{@{}r ccc@{}}
\toprule[1.2pt]
$S$ & Infer.\ & Solve & Total \\
\midrule[1.2pt]
10  & $1.0\times$ & $1.0\times$ & $1.0\times$ \\
20  & $1.9\times$ & $1.2\times$ & $1.2\times$ \\
50  & $4.6\times$ & $1.5\times$ & $1.7\times$ \\
\bottomrule[1.2pt]
\end{tabular}
\end{small}
\end{minipage}
\hfill
\begin{minipage}[t]{0.48\textwidth}
\centering
\captionof{table}{Large instances.}
\label{tab:scenario_runtime_lrg}
\begin{small}
\setlength{\tabcolsep}{2pt}
\begin{tabular}{@{}r ccc@{}}
\toprule[1.2pt]
$S$ & Infer.\ & Solve & Total \\
\midrule[1.2pt]
20  & $1.0\times$ & $1.0\times$ & $1.0\times$ \\
50  & $2.5\times$ & $1.1\times$ & $1.3\times$ \\
100 & $5.0\times$ & $1.2\times$ & $1.6\times$ \\
\bottomrule[1.2pt]
\end{tabular}
\end{small}
\end{minipage}

\vspace{0.5em}
The pattern is consistent across both instance populations. Inference grows
nearly linearly with $S$, as expected, but the MILP solve increases by only
$1.1$--$1.5\times$ even when the original scenario count grows $5\times$.
Consequently, total runtime grows only $1.6$--$1.7\times$. This confirms that
NeurPRISE decouples the dominant downstream optimization cost from the
original scenario count, providing practical runtime growth control as the
uncertainty set expands.

\section{Ablation Studies}
\label{subsec:ablation}

\subsection{Encoding and Fusion Strategy}
\label{subsubsec:ablation_model_choice}

NeurPRISE first encodes each scenario graph with a shared GNN and then refines the resulting per-scenario embeddings through cross-scenario attention to obtain set-aware scenario embeddings (Section~\ref{sec:arch}). We ablate how instance context and scenario embeddings are fused in the decoder, comparing three strategies:
(i) \textbf{LF (Late Fusion):} instance context and scenario embeddings are produced by separate GNNs; fusion occurs \emph{after} cross-scenario attention, via context-to-scenario attention in the decoder;
(ii) \textbf{EF (Early Fusion):} instance context and scenario embeddings are produced by separate GNNs; the instance-context embedding is concatenated as a global token and jointly refined with scenario embeddings \emph{during} cross-scenario attention;
(iii) \textbf{Ours:} no separate instance-context encoder; the instance context is derived by mean pooling over pre-attention GNN embeddings, and scenarios are scored via multi-head scoring between this pooled context and the attention-refined scenario embeddings.

Results are in Table~\ref{tab:model_ablation_small_same_dist} (mean over 5 seeds). We adopt Ours as the default: it avoids a separate encoder for instance context, reducing model complexity without sacrificing performance.

\subsection{Loss Comparison}
\label{subsubsec:ablation_loss_comparison}

Since this task can be viewed as scenario selection, a natural baseline is per-scenario binary classification (\textbf{BCE}): multi-hot targets from the PRISE trace, treating every selected scenario equally. Our approach, \textbf{SWKL} (Score-Weighted KL), converts log-compressed marginal gains into a probability distribution via temperature-scaled softmax and trains with KL divergence, so that ordinal misrankings are penalised in proportion to gain differences.

Table~\ref{tab:loss_ablation_merged} compares the two losses at small and medium scale. At small scale SWKL is moderately better, winning more cells on VC; at medium the advantage becomes clear on SEL and VC. We adopt SWKL as the default for its broader advantage across problems.

% File: ablation_sidebyside.tex
% Tables 14 (fusion) and 15 (loss) side by side.

\begin{table*}[ht]
\begin{small}
\setlength{\tabcolsep}{2pt}

\noindent
\begin{minipage}[t]{0.44\textwidth}
\centering
\captionof{table}{Encoder-fusion ablation on small instances, averaged over 5 seeds.}
\label{tab:model_ablation_small_same_dist}
\begin{tabular}{@{}ll cccc@{}}
\toprule[1.2pt]
\multirow{2}{*}{\textsc{Prob.}} & \multirow{2}{*}{\textsc{Model}} & \multicolumn{4}{c}{Regret (\%) $\downarrow$} \\
\cmidrule(lr){3-6}
& & $k{=}1$ & $k{=}2$ & $k{=}4$ & $k{=}6$ \\
\midrule[1.2pt]
\multirow{3}{*}{\rotatebox[origin=c]{90}{SEL}} & LF & \textbf{5.31} & 2.32 & 1.05 & 0.68 \\
 & EF & 5.39 & \textbf{2.27} & 0.98 & 0.58 \\
 & Ours & 5.86 & 2.30 & \textbf{0.83} & \textbf{0.52} \\
\midrule
\multirow{3}{*}{\rotatebox[origin=c]{90}{VC}} & LF & \textbf{14.01} & \textbf{8.73} & 4.40 & 2.67 \\
 & EF & 14.07 & 8.98 & 4.35 & \textbf{2.65} \\
 & Ours & 14.24 & 8.99 & \textbf{4.32} & 2.69 \\
\bottomrule[1.2pt]
\end{tabular}
\end{minipage}
\hfill
\begin{minipage}[t]{0.54\textwidth}
\centering
\captionof{table}{Loss-function ablation.}
\label{tab:loss_ablation_merged}
\begin{tabular}{@{}ll l cccc@{}}
\toprule[1.2pt]
\textsc{Prob.} & \textsc{Scale} & \textsc{Loss} & $k{=}1$ & $k{=}2$ & $k{=}4$ & $k{=}6$ \\
\midrule[1.2pt]
\multirow{4}{*}{SEL}
 & \multirow{2}{*}{Small}  & BCE  & 6.34 & 2.44 & \textbf{0.70} & \textbf{0.16} \\
 &                         & SWKL & \textbf{5.59} & \textbf{2.18} & 0.79 & 0.42 \\
\cmidrule(l){2-7}
 & \multirow{2}{*}{Medium} & BCE  & 7.20 & 6.51 & 4.34 & 3.08 \\
 &                         & SWKL & \textbf{4.38} & \textbf{4.12} & \textbf{2.19} & \textbf{1.76} \\
\midrule
\multirow{4}{*}{VC}
 & \multirow{2}{*}{Small}  & BCE  & 15.23 & 9.41 & 4.48 & 2.34 \\
 &                         & SWKL & \textbf{14.75} & \textbf{9.36} & \textbf{4.20} & 2.34 \\
\cmidrule(l){2-7}
 & \multirow{2}{*}{Medium} & BCE  & 17.17 & 10.32 & \textbf{5.67} & 4.06 \\
 &                         & SWKL & \textbf{15.71} & \textbf{9.44} & 6.10 & \textbf{3.57} \\
\bottomrule[1.2pt]
\end{tabular}
\end{minipage}

\end{small}
\end{table*}

% F.3 Single-Pass vs Sequential Selection — removed to save space.
% Content and table preserved in table/stepwise_ablation_table.tex.
% \subsection{Single-Pass vs.\ Sequential Selection}
% \label{subsubsec:ablation_stepwise}
% \input{table/stepwise_ablation_table}

% E. CFLP Feasibility Analysis — moved to CFLP appendix (sec/A_cflp_appendix.tex)
% \input{sec/A5_cflp_feasibility}

% G. Extended Analysis for CFLP (encoding, training, feasibility, results, generalization)
% File: A_cflp_appendix.tex
% Dedicated CFLP appendix — consolidated from A3, A4, A5, and shared tables.

\section{Extended Analysis for CFLP}
\label{sec:cflp_appendix}

\subsection{Problem Formulation}

%\textcolor{red}{%
Among the three problem classes studied, CFLP is structurally distinct: scenario uncertainty enters through customer demands in the constraint right-hand side rather than through objective coefficients. This distinction has two important consequences. First, the SOR baseline~\cite{goerigk2023optimal}, which requires objective-coefficient uncertainty, is inapplicable to CFLP. Second, because demand extremes can exceed the total opened capacity, reduced-set recourse may be infeasible---a failure mode absent from SEL and VC, where recourse is always feasible regardless of the selected scenarios. Additionally, CFLP employs a compact problem-specific graph encoding that reduces the scenario graph from $O(mn)$ variable nodes (under the standard constraint--variable representation) to $O(m{+}n)$ facility and customer nodes, yielding significant memory and throughput gains. These characteristics motivate a dedicated analysis separate from the SEL/VC results in the main text.%
%}

We now describe the CFLP problem details. The first stage decides which facilities to open (high-stakes infrastructure investment); the second stage assigns customers to open facilities after uncertain demand is realized. The 2RO structure is especially valuable here: since only the realized demand scenario must be served, fewer facilities may suffice compared to a single-stage robust model that hedges against all scenarios simultaneously.
We adopt the instance parameter ranges of \cite{zeng2013solving} (fixed costs $f_j\!\in\![100,1000]$, capacity costs $a_j\!\in\![10,100]$, max capacity $K_j\!\in\![200,700]$, transportation costs $c_{ij}\!\in\![1,1000]$, demands $d_i\!\in\![10,500]$), using discrete scenario uncertainty in place of their budgeted polyhedral set. Let $f_j$ be the fixed cost of opening facility $j$, $a_j$ the unit capacity cost, $K_j$ the maximum capacity, $c_{ij}$ the unit transportation cost from facility $j$ to customer $i$, and $d_i^{(s)}$ the demand of customer $i$ under scenario $s$. Instances are denoted CFLP-$m$-$n$-$s$ ($m$~facilities, $n$~customers, $S$~scenarios). We use Small(CFLP-30-30-50), Medium(CFLP-70-70-50), and Large(CFLP-70-70-100) configurations:
\begin{equation}
\begin{aligned}
\min_{x,\,z,\,\eta,\,\{y^{(s)}\}} \quad & \textstyle\sum_{j=1}^{m} f_j\, x_j + \textstyle\sum_{j=1}^{m} a_j\, z_j + \eta \\
\text{s.t.}\quad
& \eta \ge \textstyle\sum_{i=1}^{n}\sum_{j=1}^{m} c_{ij}\, y_{ij}^{(s)}, & \forall s\in[S],\\
& z_j \le K_j\, x_j, & \forall j\in[m],\\
& \textstyle\sum_{i=1}^{n} y_{ij}^{(s)} \le z_j, & \forall j\in[m],\;\forall s\in[S],\\
& \textstyle\sum_{j=1}^{m} y_{ij}^{(s)} \ge d_i^{(s)}, & \forall i\in[n],\;\forall s\in[S],\\
& y_{ij}^{(s)} \ge 0,\; z_j \ge 0,\; x_j \in \{0,1\}.
\end{aligned}
\label{eq:cflp_milp}
\end{equation}
Note that the scenario-dependent term $d_i^{(s)}$ appears in the demand constraint RHS (line~3), not in the objective coefficients---this places CFLP outside the scope of \cite{goerigk2023optimal}, whose framework requires uncertainty to enter through the objective coefficients (see Section~\ref{sec:experiments}).

\subsection{Encoding and Training}

% \paragraph{CFLP (Problem-Specific Encoding).}
Following \cite{wu2022learning}, we use a problem-specific bipartite graph with facility nodes and customer nodes. Facility features include fixed cost $f_j$, capacity cost $a_j$, and maximum capacity $K_j$; customer features include scenario-specific demand $d_i^{(s)}$; and edges encode transportation costs $c_{ij}$ (Table~\ref{tab:cflp_feature_summary}). Importantly, NeurPRISE is agnostic to the choice of scenario graph encoding: different encodings can be used as long as the encoder outputs one fixed-dimensional embedding per scenario. We evaluate the CFLP-specific encoding against the standard constraint--variable encoding below.

\begin{figure}[t]
\centering
\resizebox{0.5\textwidth}{!}{% CFLP Problem-Specific Graph Encoding
% Usage: \resizebox{\linewidth}{!}{\input{fig/graph_cflp}}
% NOTE: No \begin{figure} wrapper — the caller provides the figure environment.
\begin{tikzpicture}[
    >=stealth,
    inputbox/.style={
        draw, rounded corners=2pt,
        minimum height=0.7cm,
        text width=3.4cm,
        align=center,
        font=\LARGE,
        inner sep=2pt,
        fill=gray!10, draw=gray!50, thick
    },
    procbox/.style={
        draw, rounded corners=3pt, minimum height=1.0cm,
        minimum width=2.4cm, align=center, font=\LARGE,
        fill=#1!10, draw=#1!50, thick
    },
    procbox/.default={blue},
    arr/.style={-, thick, color=black!70},
    lbl/.style={font=\LARGE\itshape, text=black!60},
    edgelbl/.style={font=\LARGE, text=black!70, fill=white, inner sep=1pt},
]

% Force consistent bounding box (match graph_bipartite)
\useasboundingbox (-5.2, -3.0) rectangle (10.2, 4.5);

% Graph title
\node[font=\LARGE\bfseries, text=black!80] at (2.5, 4.0) {$G^{(s)}_{\text{CFLP}}$};

% Column headers
\node[lbl, font=\LARGE\itshape] at (0, 3.5) {Facilities ($m$)};
\node[lbl, font=\LARGE\itshape] at (5, 3.5) {Customers ($n$)};

% --- Facility nodes (LEFT column) ---
\node[procbox=blue] (f1) at (0, 2.5)  {$f_1$};
\node[procbox=blue] (f2) at (0, 0.0)  {$f_2$};
\node[font=\LARGE, text=black!60] at (0, -1.2) {$\vdots$};
\node[procbox=blue] (fm) at (0,-2.5)  {$f_m$};

% Facility feature annotations
\node[inputbox, anchor=east] at ([xshift=-0.4cm]f1.west) {$[f_j,\, a_j,\, K_j]$};
\node[inputbox, anchor=east] at ([xshift=-0.4cm]f2.west) {$[f_j,\, a_j,\, K_j]$};
\node[inputbox, anchor=east] at ([xshift=-0.4cm]fm.west) {$[f_j,\, a_j,\, K_j]$};

% --- Customer nodes (RIGHT column) ---
\node[procbox=green!60!black] (c1) at (5, 2.5)  {$c_1$};
\node[procbox=green!60!black] (c2) at (5, 0.0)  {$c_2$};
\node[font=\LARGE, text=black!60] at (5, -1.2) {$\vdots$};
\node[procbox=green!60!black] (cn) at (5,-2.5)  {$c_n$};

% Customer feature annotations
\node[inputbox, anchor=west] at ([xshift=0.4cm]c1.east) {$[d_i^{(s)}]$};
\node[inputbox, anchor=west] at ([xshift=0.4cm]c2.east) {$[d_i^{(s)}]$};
\node[inputbox, anchor=west] at ([xshift=0.4cm]cn.east) {$[d_i^{(s)}]$};

% --- Edges (fully connected bipartite) ---
\draw[arr] (f1.east) -- (c1.west) node[edgelbl, pos=0.3, above] {$c_{ij}$};
\draw[arr] (f1.east) -- (c2.west);
\draw[arr] (f1.east) -- (cn.west);
\draw[arr] (f2.east) -- (c1.west);
\draw[arr] (f2.east) -- (c2.west);
\draw[arr] (f2.east) -- (cn.west);
\draw[arr] (fm.east) -- (c1.west);
\draw[arr] (fm.east) -- (c2.west);
\draw[arr] (fm.east) -- (cn.west);

\end{tikzpicture}}
\caption{Problem-specific graph for CFLP. }
\label{fig:graph_cflp}
\end{figure}

\begin{table}[ht]
\centering
\caption{CFLP graph encoding features.}
\label{tab:cflp_feature_summary}
\small
\vspace{5pt}
\begin{tabular}{@{}l l c l c@{}}
\toprule[1.2pt]
\textsc{Prob.} & \textsc{Node Features} & $d_{\text{node}}$ & \textsc{Edge Features} & $d_{\text{edge}}$ \\
\midrule[1.2pt]
CFLP & facility/customer ind.\,($\times 2$), $f_j$, $a_j$, $K_j$, $d_i^{(s)}$ & 6 & transport cost $c_{ij}$ & 1 \\
\bottomrule[1.2pt]
\end{tabular}
\end{table}

% Encoding ablation (moved wholesale from A_encoding_ablation.tex)
%\subsubsection{Graph Encoding for CFLP}
%\label{subsubsec:ablation_encoding}

For CFLP, the standard constraint--variable (CV) encoding represents assignment variables $y_{ij}^{(s)}$, producing $O(mn)$ variable nodes per scenario. In contrast, the problem-specific (PS) encoding in Section~\ref{subsec:graph_encoding} uses facility and customer nodes and reduces the graph size to $O(m+n)$.

To justify this compact encoding, we compare CV and PS on small-scale CFLP instances. Table~\ref{tab:encoding_ablation} reports regret, peak GPU memory, and training throughput. PS achieves comparable or lower regret across all budget levels while training at $2.7{\times}$ higher throughput with a $4{\times}$ larger batch size at similar GPU memory. Despite explicitly representing the full assignment-variable structure, CV does not improve over PS at this scale. These results suggest that PS preserves the information needed for scenario selection while avoiding the memory cost of the full CV representation.

% Encoding ablation: Problem-Specific (PS) vs Constraint-Variable (CV) for CFLP
% Updated 2026-04-23: Gap → Regret metric.

\begin{table*}[ht]
\centering
\caption{Encoding ablation on CFLP at small scale. Result is evaluated at $k{=}6$.}
\label{tab:encoding_ablation}
\begin{small}
\setlength{\tabcolsep}{2pt}
\begin{tabular}{lcccc}
\toprule[1.2pt]
\textbf{Encoding} 
& \textbf{Regret\% ($\downarrow$)}
& \textbf{GPU (MB)}
& \textbf{s/epoch}
& \textbf{Batch} \\
\midrule[1.2pt]
CV
& 0.12
& 3{,}648
& 114
& 8 \\
PS
& \textbf{0.10}
& 3{,}973
& 43
& 32 \\
\bottomrule[1.2pt]
\end{tabular}
\end{small}
\end{table*}

\paragraph{Training configuration.}
For medium and large CFLP instances, we reduce the batch size to fit GPU memory and scale the learning rate linearly as $\mathrm{lr}=6{\times}10^{-4}(B/32)$, where $B$ is the effective batch size. Small scale problem uses $B{=}32$; Medium scale problem uses $B{=}24$ and $\mathrm{lr}{=}4.5{\times}10^{-4}$; Large scale problem uses $B{=}16$ and $\mathrm{lr}{=}3{\times}10^{-4}$. For the encoding ablation (Table~\ref{tab:encoding_ablation}), the CV encoding uses $B{=}8$ due to its higher memory footprint, while PS uses $B{=}32$.

\paragraph{Input normalization.}

All remaining floating-point features are normalized by semantic group. In CFLP, each heterogeneous feature type forms its own group, e.g., capacities, demands, fixed costs and transportation costs are normalized separately because their scales are not directly comparable. For a generic group $g$ with tensor $F_g$, we use
\begin{equation}
\tilde{F}_g = \frac{F_g}{\|F_g\|_\infty + \epsilon}.
\label{eq:generic_scale_norm}
\end{equation}
Since these features are non-negative, this maps each group to $[0,1]$ without shifting the origin.
 
\subsection{Performance and Generalization}

On CFLP, MaxSum is naturally competitive in-distribution because demand-only uncertainty makes aggregate demand a strong proxy for worst-case recourse cost. At $k=6$, NeurPRISE achieves the best regret at small scale and remains comparable to MaxSum at medium and large scales, while being much faster than PRISE (Table~\ref{tab:cflp_small}). Random/K-means are mostly infeasible because they often miss demand-extreme scenarios.%

For generalization, we evaluate three deployment changes: larger problem size, larger scenario count, and shifted demand distributions. The size-scaling experiment increases CFLP instances by $5\times$, and the scenario-count experiment evaluates larger scenario sets with $S=100$ and $S=200$ (Table~\ref{tab:cflp_generalization}). The distribution-shift experiment considers two non-uniform demand families, both unseen during training (Table~\ref{tab:cflp_distribution_normal}). In the normal setting, per-customer demand means satisfy $\mu_j \sim \mathrm{Uniform}[180,260]$, with $\sigma_j = \mathrm{clip}(0.08\,\mu_j,\,12,\,22)$ and demands clipped to $[10,500]$. In the multimodal setting, each instance sample 3-6 demand regimes, each with per-customer centers drawn from $[80,380]$, and per-customer deviations from $[0.05,0.20]$. Each scenario selects one regime randomly and draws demands within the corresponding deviation band, with a shared per-scenario scaling factor (std deviation 8\%) to simulate global demand shifts (e.g., peak-season surges).

NeurPRISE performs consistently well under these generalization tests. Under $5\times$ size scaling and larger scenario sets, NeurPRISE obtains the lowest regret, outperforming both MaxSum and Neur2RO (Table~\ref{tab:cflp_generalization}). This suggests that the learned scoring rule captures transferable structural signals of scenario importance, rather than merely fitting demand. Training on small instances is also sufficient: the small-scale model performs comparably to, target-scale training. Under both normal and multimodal shifts, NeurPRISE trained on uniform data remains the best among methods that do not use target-distribution training data, outperforming MaxSum and Neur2RO (Table~\ref{tab:cflp_distribution_normal}). Its regret is also close to the in-distribution NeurPRISE model, indicating that uniform-trained NeurPRISE loses little from not being retrained on the target demand distribution.

% File: cflp_performance.tex
% Merged CFLP performance across all three scales (k=6 only).

\begin{table*}[ht]
\centering
\caption{CFLP performance across three scales at $k{=}6$. ---: more than 75\% infeasible.}
\label{tab:cflp_small}
\phantomsection\label{tab:cflp_medium}
\phantomsection\label{tab:cflp_large}
\begin{small}
\setlength{\tabcolsep}{2pt}
\begin{tabular}{@{}l cc cc cc@{}}
\toprule[1.2pt]
& \multicolumn{2}{c}{\textsc{Small}} & \multicolumn{2}{c}{\textsc{Medium}} & \multicolumn{2}{c}{\textsc{Large}} \\
\cmidrule(lr){2-3} \cmidrule(lr){4-5} \cmidrule(lr){6-7}
\textsc{Method} & \textsc{Reg.}$\downarrow$ & \textsc{Time} & \textsc{Reg.}$\downarrow$ & \textsc{Time} & \textsc{Reg.}$\downarrow$ & \textsc{Time} \\
\midrule[1.2pt]
Exact             & 0.00          & 151    & 0.0           & 395      & 0.0           & 836 \\
PRISE             & 0.00          & 1{,}624 & 0.0          & 1{,}820  & 0.0           & 6{,}158 \\
\cmidrule{1-7}
K-means           & ---           & ---    & ---           & ---      & ---           & --- \\
Random            & ---           & ---    & ---           & ---      & ---           & --- \\
MaxSum            & 0.15          & 28.3   & \textbf{0.1}  & 38.7     & \textbf{0.1}  & 42.5 \\
\cmidrule{1-7}
Neur2RO           & 1.25          & 13.1   & 0.8  & \textbf{9.1} & 0.9 & \textbf{12.7} \\
NeurPRISE         & \textbf{0.10} & 33.8   & \textbf{0.1}  & 40.0     & \textbf{0.1}  & 43.8 \\
\bottomrule[1.2pt]
\end{tabular}
\end{small}
\end{table*}

% File: cflp_generalization.tex
% Panels A+B merged horizontally; Panels C+D as small side-by-side tables.

\noindent
\begin{minipage}[t]{0.48\textwidth}
\centering
\begin{small}
\setlength{\tabcolsep}{2pt}
\captionof{table}{CFLP generalization.}
\label{tab:cflp_generalization}
\begin{tabular}{@{}lccc@{}}
\toprule[1.2pt]
\textsc{Method} 
& $5\times$ size 
& $S{=}100$ 
& $S{=}200$ \\
\midrule[1.2pt]
& $k{=}6$ $\downarrow$ 
& $k{=}8$ $\downarrow$ 
& $k{=}16$ $\downarrow$ \\
\midrule[1.2pt]
MaxSum    & 0.03          & 0.16          & 0.08 \\
Neur2RO   & 0.36          & 1.36          & 1.03 \\
NeurPRISE & \textbf{0.02} & \textbf{0.13} & \textbf{0.01} \\
\bottomrule[1.2pt]
\end{tabular}
\end{small}
\end{minipage}
\hfill
\begin{minipage}[t]{0.48\textwidth}
\centering
\begin{small}
\setlength{\tabcolsep}{2pt}
\captionof{table}{CFLP distribution shift at $k{=}6$.}
\label{tab:cflp_distribution_normal}
\phantomsection\label{tab:cflp_distribution_mm}
\begin{tabular}{@{}lcc@{}}
\toprule[1.2pt]
\textsc{Method} 
& \textsc{Norm.}$\,\downarrow$ 
& \textsc{MM}$\,\downarrow$ \\
\midrule[1.2pt]
MaxSum                  & 0.19          & 0.15 \\
Neur2RO                 & 1.05          & 2.13 \\
NeurPRISE$_\text{uni}$  & 0.14          & \textbf{0.11} \\
NeurPRISE$_\text{in}$   & \textbf{0.11} & \textbf{0.11} \\
\bottomrule[1.2pt]
\end{tabular}
\end{small}
\end{minipage}

\subsection{Feasibility Analysis}
\label{sec:cflp_feasibility}
\begin{minipage}[t]{0.56\textwidth}
Unlike SEL and VC, where uncertainty enters only objective coefficients and recourse is always feasible, CFLP has demand uncertainty in the constraint right-hand side (Eq.~\ref{eq:cflp_milp}). A first-stage decision $x^{(k)\star}$ optimized for a reduced set $R^{(k)}$ may open insufficient facility capacity to serve full demand under scenarios outside $R^{(k)}$, causing the recourse infeasible. Table~\ref{tab:cflp_infeasibility} reports infeasibility rates (\% of test instances where at least one scenario in $\Xi$ causes infeasible recourse) for each method and scenario budget across three problem scales. K-Means and Random exhibit high infeasibility because they miss demand extremes: K-Means selects near-centroid scenarios, while Random hits max-demand scenarios with probability ${\approx}k/S$. Neur2RO is primarily a surrogate-based objective prediction method, not a demand-critical scenario-selection method; we adapt its auxiliary feasibility check to our settings to avoid infeasibility, and all Neur2RO CFLP results include this check.
\end{minipage}
\hfill
\begin{minipage}[t]{0.4\textwidth}
\centering\scriptsize
\vspace{-6pt}
\captionof{table}{CFLP infeasibility rate (\%).}
\label{tab:cflp_infeasibility}
\setlength{\tabcolsep}{1.5pt}
\resizebox{\linewidth}{!}{%
\begin{tabular}{@{}llcccc@{}}
\toprule[1.2pt]
\textsc{Scale} & \textsc{Method} & $k{=}1$ $\downarrow$ & $k{=}2$ $\downarrow$ & $k{=}4$ $\downarrow$ & $k{=}6$ $\downarrow$ \\
\midrule[1.2pt]
\multirow{5}{*}{\rotatebox[origin=c]{90}{Small}}
 & K-Means         & 98.8 & 97.2 & 93.2 & 90.4 \\
 & Random          & 94.8 & 89.2 & 83.6 & 79.6 \\
 & MaxSum          & 1.6 & \textbf{0.0} & \textbf{0.0} & \textbf{0.0} \\
 & Neur2RO         & \textbf{0.0} & \textbf{-} & \textbf{-} & \textbf{-} \\
\cmidrule(l){2-6}
 & NeurPRISE       & 20.0 & 5.2 & 0.4 & \textbf{0.0} \\
\midrule
\multirow{5}{*}{\rotatebox[origin=c]{90}{Medium}}
 & K-Means   & 100.0 & 98.0 & 98.0 & 86.0 \\
 & Random    & 98.0  & 92.0 & 86.0 & 86.0 \\
 & MaxSum    & \textbf{0.0} & \textbf{0.0} & \textbf{0.0} & \textbf{0.0} \\
 & Neur2RO   & \textbf{0.0} & \textbf{-} & \textbf{-} & \textbf{-} \\
\cmidrule(l){2-6}
 & NeurPRISE & 4.0 & \textbf{0.0} & \textbf{0.0} & \textbf{0.0} \\
\midrule
\multirow{5}{*}{\rotatebox[origin=c]{90}{Large}}
 & K-Means   & 98.0  & 88.0  & 88.0 & 90.0 \\
 & Random    & 100.0 & 100.0 & 96.0 & 94.0 \\
 & MaxSum    & \textbf{0.0} & \textbf{0.0} & \textbf{0.0} & \textbf{0.0} \\
 & Neur2RO   & \textbf{0.0} & \textbf{-} & \textbf{-} & \textbf{-} \\
\cmidrule(l){2-6}
 & NeurPRISE & 10.0 & 2.0 & \textbf{0.0} & \textbf{0.0} \\
\bottomrule[1.2pt]
\end{tabular}%
}
\end{minipage}

NeurPRISE recovers feasibility through scenario selection alone without any explicit feasibility check: its infeasibility rate drops monotonically with $k$ and reaches 0\% at $k{=}6$ across all three scales. This is because demand-extreme scenarios carry large marginal gains in the PRISE labeling, so they rank high in the learned scoring and are included in the selected set once $k$ is sufficient.

\stopcontents[appendix]

\end{document}